%% This is a JAIR Example File Compiled by Nicholas Mattei (nsmattei@tulane.edu) 
%% and Odd Erik Gundersen (odderik@ntnu.no)
%% and Mykel Kochenderfer (mykel@stanford.edu)
%% March 29, 2025
%%
%% This file is based off the ACM Latex Template https://www.acm.org/publications/proceedings-template
%% Revision 2.12 (12/28/2024)
%% 
%% Please see https://www.jair.org/index.php/jair for more information and submission instructions.
%%

%% The first command in your LaTeX source must be the \documentclass
%% command.
%%
%% For submission and review of your manuscript please change the
%% command to \documentclass[manuscript, screen, review]{jair}.
%%
% \documentclass[review]{jair}
%\documentclass[manuscript, screen, review]{jair}
\documentclass[screen]{jair}
\usepackage[utf8]{inputenc}

\setcopyright{cc}
\copyrightyear{2025}
\acmYear{2025}
\acmDOI{10.1613/jair.1.19171}

\JAIRAE{Marek Petrik}
\JAIRTrack{}
\acmVolume{ 83}
\acmArticle{33}
\acmMonth{8}
\acmYear{2025}

\RequirePackage[
  datamodel=acmdatamodel,
  style=acmnumeric,
  backend=biber,
  giveninits=true
  ]{biblatex}

%\usepackage{amsthm}
%% The next line specifies the bibliography file.
\addbibresource{ref.bib}
\newcommand{\citep}{\cite}
\newcommand{\citet}{\textcite}

% Recommended, but optional, packages for figures and better typesetting:
% \usepackage{microtype}
\usepackage{graphicx}
\usepackage{hyperref}
\usepackage{subfigure}
\usepackage{booktabs} % for professional tables

\usepackage{url}

% For theorems and such
% \usepackage{amsmath}
% \usepackage{amssymb}
\usepackage{amsfonts}

\usepackage{thm-restate}
\usepackage[capitalise,noabbrev]{cleveref} % cref
\usepackage{diagbox}
\usepackage{multirow}
\usepackage{wrapfig}
% For algorithms
\usepackage{algorithm}
\usepackage[noend]{algorithmic}% For algorithms
\usepackage{setspace}% algorithm spacing
\usepackage{anyfontsize} % JAIR font warnings

%%%%%%%%%%%%%%%%%%%%%%%%%%%%%%%%
% THEOREMS
%%%%%%%%%%%%%%%%%%%%%%%%%%%%%%%%
\theoremstyle{plain}
\newtheorem{theorem}{Theorem}[section]
\newtheorem{proposition}[theorem]{Proposition}

\theoremstyle{definition}

\newtheorem{assumption}[theorem]{Assumption}
\theoremstyle{remark}

% Todonotes is useful during development; simply uncomment the next line
%    and comment out the line below the next line to turn off comments
% \usepackage[disable,textsize=tiny]{todonotes}
\setlength{\marginparwidth}{2cm}  % AI suggested fix
\usepackage[textsize=tiny]{todonotes}

\newcommand{\pinv}{\dagger}

% Figure widths

\newcommand{\figwidththree}{0.30\textwidth}

\input{definitions}

% Definitions of handy macros can go here

%\makeatletter
% \newcommand{\citet}{\citeA}
% \newcommand{\citep}{\cite}
%\makeatother

% Heading arguments are {volume}{year}{pages}{date submitted}{date published}{paper id}{author-full-names}

% Short headings 
%\jairheading{1}{1993}{1-15}{6/91}{9/91}
%\ShortHeadings{Generalized Temporal Difference Learning for Supervised Learning}{Pan, Wen, Xiao, \& Torr}
%\firstpageno{1}

\begin{document}

\title[An MRP Formulation for Supervised Learning: Generalized Temporal Difference Learning Models]{An MRP Formulation for Supervised Learning: Generalized Temporal Difference Learning Models}

%%
%% The "author" command and its associated commands are used to define
%% the authors and their affiliations.
%% Of note is the shared affiliation of the first two authors, and the
%% "authornote" and "authornotemark" commands
%% used to denote shared contribution to the research and/or corresponding author.
\author{Yangchen Pan}
\authornote{Corresponding Author.}
\orcid{0009-0000-8297-9045}
\email{yangchen.pan@eng.ox.ac.uk}
\affiliation{%
  \institution{University of Oxford}
  \city{Oxford}
  \country{United Kingdom}
}

\author{Junfeng Wen}
\authornote{Co-first \& Corresponding author with Yangchen Pan.}
\orcid{0000-0002-4128-1422}
\email{junfengwen@gmail.com}
\affiliation{%
  \institution{Carleton University}
  \city{Ottawa}
  \country{Canada}
}

\author{Chenjun Xiao}
\orcid{0009-0006-5051-7384}
\email{chenjunx@cuhk.edu.cn}
\affiliation{%
  \institution{The Chinese University of Hong Kong (Shenzhen)}
  \city{Shenzhen}
  \country{China}
}

\author{Philip H.S. Torr}
\orcid{0009-0006-0259-5732}
\email{philip.torr@eng.ox.ac.uk}
\affiliation{%
  \institution{University of Oxford}
  \city{Oxford}
  \country{United Kingdom}
}

\renewcommand{\shortauthors}{Pan, Wen, Xiao \& Torr}
% \maketitle

\begin{abstract}
{\bf Background:}
Traditional supervised learning (SL) assumes data points are independently and identically distributed (i.i.d.), which overlooks dependencies in real-world data. Reinforcement learning (RL), in contrast, models dependencies through state transitions. 

{\bf Objectives:} 
This study aims to bridge SL and RL by reformulating SL problems as RL tasks, enabling the application of RL techniques to a wider range of SL scenarios. We aim to
model SL data as interconnected, and develop novel temporal difference (TD) algorithms that can accommodate diverse data types. Our objectives are to (1) establish conditions where TD outperforms ordinary least squares (OLS), (2) provide convergence guarantees for the generalized TD algorithm, and (3) validate the approach empirically using synthetic and real-world datasets.

{\bf Methods:} We reformulate traditional SL as a RL problem by modeling data points as a Markov Reward Process (MRP). We then introduce a concept analogous to the inverse link function in generalized linear models, allowing our TD algorithm to handle various data types. Our analysis, grounded in variance estimation, identifies conditions where TD outperforms OLS. We establish a convergence guarantee by conceptualizing the TD update rule as a generalized Bellman operator. Empirical validation begins with synthetic data progressively matching theoretical assumptions to verify our analysis, followed by evaluations on real-world datasets to demonstrate practical utility.

{\bf Results:}
Our theoretical analysis shows that TD can outperform OLS in estimation accuracy when data noise is correlated. Our approach generalizes across various loss functions and SL datasets. We prove that the Bellman operator in our TD framework is a contraction, ensuring convergence for both expected and stochastic TD updates. Empirically, TD outperforms SL baselines when data aligns with its assumptions, remains competitive across diverse datasets, and is robust to hyperparameter choices.

{\bf Conclusions:}
This study demonstrates that SL can be reformulated as a problem of interconnected data modeled by an MRP, effectively solved using TD learning. Our generalized TD is theoretically sound, with convergence guarantees, and practically effective. It generalizes OLS, offering superior performance on correlated data. This work enables RL techniques to benefit SL tasks, offering a pathway for future advancements.
\end{abstract}

%% To be updated by authors.
\received{15 May 2025}
%\received[revised]{14 May 2025}
\received[accepted]{14 July 2025}

%%
%% This command processes the author and affiliation and title
%% information and builds the first part of the formatted document.
\maketitle

\section{Introduction}
\label{sec:intro}
The primary objective of statistical supervised learning (SL) is to learn the relationship between the features and the output (response) variable. 
To achieve this, generalized linear models \cite{nelder1972glm,mccullagh1989generalized} are considered a generic algorithmic framework employed to derive objective functions. These models make specific assumptions regarding the conditional distribution of the response variable given input features, which can take forms such as Gaussian (resulting in ordinary least squares), Poisson (resulting in Poisson regression), or multinomial (resulting in logistic regression or multiclass softmax cross-entropy loss).

In recent years, reinforcement learning (RL)—widely used in interactive learning scenarios—has experienced a surge in popularity. This growth has fostered increasing synergy between RL and SL, where each paradigm complements the other in meaningful ways. In SL-assisted RL, for example, the subfield of imitation learning\citep{ahmed2017il} leverages expert demonstrations to regularize or accelerate the RL training process. Weakly supervised methods\citep{lisa2020wsrl} have also been employed to constrain the RL task space, while relabeling techniques have facilitated more effective goal-conditioned policy learning~\citep{ghosh2021learning}. 
Return-conditioned policy learning naturally incorporates traditional SL techniques. A common approach is to maximize the log-likelihood of a policy conditioned not only on the state but also on the trajectory return~\citep{brandfonbrener2022rcsl}. This method is typically applied in offline RL settings, where the presence of high-return trajectories makes learning a return-conditioned policy particularly appealing.
Earlier work has also drawn from SL techniques to address RL challenges. For instance, \citet{tom2001vfabysvm} proposed using support vector machines for batch value function approximation. Similarly, kernel-based methods originally developed for SL were adapted to RL problems by \citet{dirk2002kbrl,andrew2016kbrl}.

Conversely, RL has also expanded its application into traditional SL domains. RL has proven effective in fine-tuning large language models~\citep{james2017rlhf}, aligning them with user preferences. Additionally, RL algorithms~\citep{gupta2021nn} have been tailored for training neural networks (NNs), treating individual network nodes as RL agents. In the realm of imbalanced classification, RL-based control algorithms have been developed, where predictions correspond to actions, rewards are based on heuristic correctness criteria, and episodes conclude upon incorrect predictions within minority classes~\citep{enlu2020imb}. Permative prediction~\citep{juan2020perfpred} provides a theoretical framework that can deal with nonstationary data and it can be reframed within a RL context. 

Existing approaches that propose a RL framework for solving SL problems often exhibit a heuristic nature. These approaches involve crafting specific formulations, including elements like agents, reward functions, action spaces, and termination conditions, based on intuitive reasoning tailored to particular tasks. Consequently, they lack generality, and their heuristic nature leaves theoretical assumptions, connections between optimal RL and SL solutions, and convergence properties unclear. 

To the best of our knowledge, among existing work, the most generic and theoretically sound approach to converting a SL problem into a RL problem involves using the loss function of the SL task to define a reward function---typically mapping smaller losses to higher rewards~\citep{andrew2004slrl}. However, this approach (1) does not fundamentally alter the underlying data generation assumption to that of a Markov process, and (2) consequently does not enable the application of one of RL’s most distinctive algorithms—temporal-difference (TD) learning—to supervised learning problems. Therefore, it remains unclear whether there exists a unified and systematic RL formulation that both redefines the data generation process and enables the use of TD learning to model a broad class of conventional SL tasks.
Such a formulation should be agnostic to learning settings, including various tasks such as ordinary least squares regression, Poisson regression, binary or multi-class classification, etc. 

In this study, we introduce a generic Markov process formulation for data generation, offering an alternative to the conventional i.i.d.\ data assumption in SL. Specifically, when faced with a SL dataset, we view the data points as originating from a Markov reward process (MRP)~\citep{szepesvari2010algorithms}. 
To accommodate a wide range of problems, such as Poisson regression, binary or multi-class classification, we introduce a generalized TD learning model in \cref{sec:method}. \cref{sec:beliefcompare} explores the relationship between the solutions obtained through TD learning and the original linear regression. 
Furthermore, we prove that under specific conditions with correlated noise, TD estimator is more efficient than the traditional ordinary least squares (OLS) estimator. 
We show that our generalized TD algorithm---which can handle various types of datasets with different loss functions---corresponds to the application of a specialized and generalized Bellman operator. We further prove that this operator is contractive and admits a unique fixed point. Based on this property, we establish convergence results under linear function approximation, as detailed in \cref{sec:theory}.
Our paper concludes with an empirical evaluation of our TD algorithm in \cref{sec:experiment}, verifying our theoretical results, assessing its critical design choices and practical utility when integrated with a deep neural network across various tasks,
achieving competitive results and, in some cases, improvements in generalization performance.
We view our work as a step towards unifying diverse learning tasks from two pivotal domains within a single, coherent theoretical framework.

\section{Background}
\label{sec:bg}
This section provides a brief overview of the relevant concepts from statistical SL and RL settings, laying the groundwork for the rest of this paper.

\subsection{Conventional Supervised Learning}

In the context of statistical learning, we make the assumption that data points, in the form of $(\xvec, y)\in \mathcal{X} \times \mathcal{Y}$, are independently and identically distributed (i.i.d.) according to some unknown probability distribution $P$.
The goal is to find the relationship between the feature $\xvec$ and the response variable $y$ given a training dataset $\mathcal{D}=\{(\xvec_i, y_i)\}_{i=1}^n$.

In a simple linear function approximation case, a commonly seen algorithm is ordinary least squares (OLS) that optimizes the squared error objective function
\begin{equation}\label{stat-regr}
\min_\vw\ ||\Xmat \vw - \vy||^2_2.
\end{equation}
where $\Xmat$ is the $n\times d$ feature matrix, $\vy$ is the corresponding $n$-dimensional training label vector, and the $\wvec$ is the parameter vector we aim to optimize.

From a probabilistic perspective, this objective function can be derived by assuming $p(y|\vx)$ follows a Gaussian distribution with mean $\xvec^\top \wvec$ and conducting 
maximum likelihood estimation (MLE) for $\wvec$ with the training dataset. 
It is well known that $\EE[Y | \vx]$ is the optimal predictor~\citep{Bishop06}. 
For many other choices of distribution $p(y | \vx)$, generalized linear models (GLMs)~\citep{nelder1972glm} are commonly employed to estimate $\EE[Y | \vx]$. 
This includes OLS, Poisson regression~\citep{nelder1974poisson} and logistic regression, etc.

An important concept in GLMs is the \emph{inverse link function}, which we denote as $f$, that establishes a connection between the linear prediction (also called the \textbf{logit}), and the conditional expectation: $\EE[Y|\vx] = f(\vx^\top \vw)$. 
For example, in logistic regression, the inverse link function is the sigmoid function. 
We later propose generalized TD learning models within the framework of RL that correspond to GLMs, enabling us to handle a wide range of data that are modeled by different distributions in supervised learning. 

\subsection{Reinforcement Learning}

Reinforcement learning is often formulated within the Markov decision process (MDP) framework. 
An MDP can be represented as a tuple $(\mathcal{S}, \mathcal{A}, r, P, \gamma)$~\citep{puterman2014markov}, where $\mathcal{S}$ is the state space, $\mathcal{A}$ is the action space, $r: \Scal\times\Acal \mapsto \mathbb{R}$ is the reward function, $P(\cdot|s,a)$ defines the transition probability, and $\gamma\in(0, 1)$ is the discount factor. 
Given a policy $\pi: \mathcal{S}\times\mathcal{A}\rightarrow [0, 1]$, the return at time step $t$ is $G_t=\sum_{i=0}^\infty \gamma^i r(S_{t+i},A_{t+i})$, and the value of a state $s\in \mathcal{S}$ is the expected return starting from that state $v^\pi (s) = \EE_\pi[G_t|S_t=s]$. 
In this work, we focus on the policy evaluation problem for a fixed policy, thus the MDP can be reduced to a Markov reward process (MRP)~\citep{szepesvari2010algorithms} described by $(\mathcal{S}, r^\pi, P^\pi, \gamma)$ where $r^\pi(s)\defeq \sum_a\pi(a|s)r(s,a)$ and $P^\pi(s'|s)\defeq \sum_a \pi(a|s) P(s'|s, a)$.
When it is clear from the context, we will slightly abuse notations and ignore the superscript $\pi$. 

In policy evaluation problem, the objective is to estimate the state value function of a fixed policy $\pi$ by using the trajectory $s_0, r_1, s_1, r_2, s_2, ... $ generated from $\pi$. 
In linear function approximation, the value function is approximated by a parametrized function $v(s)\approx \vx(s)^\top \wvec$ with parameters $\vw$ and some fixed feature mapping $\vx: \mathcal{S} \mapsto \mathbb{R}^d$ where $d$ is the feature dimension. 
Note that the state value satisfies the Bellman equation
\begin{equation}\label{bellmaneq}
    v(s) = r(s)+\gamma \EE_{S'\sim P(\cdot|s)}[v(S')].
\end{equation}
One fundamental approach to the evaluation problem is the temporal difference (TD) learning~\citep{sutton1988td}, which uses a sampled transition $s_t, r_{t+1}, s_{t+1}$ to update the parameters $\vw$ through stochastic fixed-point iteration based on (\ref{bellmaneq}) with a step-size $\alpha>0$: 
\begin{equation}\label{eq:rl-td-update}
\vw \gets \vw + \alpha (y_{t,td} - \vx(s_t)^\top \wvec) \vx(s_t),
\end{equation}
where $y_{t,td} \defeq r_{t+1} + \gamma \vx(s_{t+1})^\top \wvec$. 
To simplify notations and align concepts, we will use $\vx_{t}\defeq\vx(s_t)$. 
In linear function approximation setting, TD converges to the solution that solves the system $\Amat \wvec = \bvec$~\citep{steven1996lstd,tsitsiklis1997analysis}, where
\begin{align}
\Amat = \EE[\vx_t (\vx_t - \gamma \vx_{t+1})^\top ]
=\Xmat^\top \Dmat(\Imat-\gamma \Pmat)\Xmat
\qquad\text{and}\qquad
\vb = \EE[r_{t+1} \vx_t]
=\Xmat^\top\Dmat\vr
\label{eq:A_b_def}
\end{align}
with $\Xmat\in\RR^{|\Scal|\times d}$ being the feature matrix whose rows are the state features $\vx_t$, $\Dmat\in\RR^{|\Scal|\times |\Scal|}$ being the diagonal matrix with the stationary distribution $D(s_t)$ on the diagonal, $\Pmat\in\RR^{|\Scal|\times |\Scal|}$ being the transition probability matrix (i.e., $\Pmat_{ij}=P(s_j|s_i)$) and $\vr\in\RR^{|\Scal|}$ being the reward vector.
% To facilitate later discussions, we define $\Smat\defeq \Dmat(\Imat-\gamma \Pmat)$. 
Note that the matrix $\Amat$ is often invertible under mild conditions~\citep{tsitsiklis1997analysis}.

\section{MRP View and Generalized TD Learning}
\label{sec:method}
This section describes our MRP construction given the same dataset $\mathcal{D} = \{(\vx_i, y_i)\}_{i=1}^n$ and proposes our generalized TD learning algorithm to solve it. 
This approach is based on the belief that these data points originate from some MRP, rather than being i.i.d.\ generated.

\textbf{Regression}.
We start by considering the basic regression setting with linear models before introducing our generalized TD algorithm. 
\cref{tab:slmdp} summarizes how we can view concepts in conventional SL from an RL perspective. 
The key is to treat the original training label as a state value that we are trying to learn, and then the reward function can be derived from the Bellman equation (\ref{bellmaneq}) as
\begin{align}
\label{eq:reward_def}
r(s) = v (s) - \gamma \EE_{S'\sim P(\cdot|s)}[v(S')].
\end{align}
We will discuss the choice of $\Pmat$ later. 
Note that although \cref{eq:reward_def} may appear familiar to those in the inverse reinforcement learning (IRL) community, where the motivation often stems from the difficulty of specifying a reward function and thus inferring it from expert demonstrations, our approach is fundamentally different from IRL. 
In our case, the motivation is not related to the challenge of reward specification, as this is a supervised learning problem in which even the value function is explicitly known. 
As a result, the reward can be directly estimated from a single transition.
At each iteration (or time step in RL), the reward can be approximated using a stochastic example. 
For instance, assume that at iteration $t$ (i.e., time step in RL), we obtain an example $(\vx^{(t)}_i, y^{(t)}_i)$. 
We use superscripts and subscripts to denote that the $i$th training example is sampled at the $t$th time step. 
The next example $(\vx^{(t+1)}_j, y^{(t+1)}_j)$ is then sampled according to
$P(\cdot|\vx^{(t)}_i)$ and the reward can be estimated as $r^{(t+1)} = y^{(t)}_{i} - \gamma y^{(t+1)}_{j}$ by approximating the expectation in \cref{eq:reward_def} with a stochastic example.
As one might notice, in a sequential setting, $t$ is monotonically increasing, so from now on, we will simply use the simplified notation $(\vx_t, y_t)$ to denote the training example sampled at time step $t$. 

\begin{table}[t]
    \caption{How definitions in SL correspond to those in RL}
    \label{tab:slmdp}
%\vspace{-0.8cm}
%\small
    \centering
    \begin{tabular}{c|c}
        \hline
        \textbf{SL Definitions} & \textbf{RL Definitions} \\
        \hline
        Feature matrix $\Xmat$ & Feature matrix $\Xmat$ \\
        \hline
        Feature of the $i$th example $\xvec_i$ & The state feature $\vx(s_i) = \xvec_i$ \\
        \hline
        Training target $y_i$ of $\xvec_i$ & The state value $v(s_i)=y_i$ \\
        \hline
    \end{tabular}
    % \vspace{-0.5cm}
\end{table}

We now summarize and compare the updating rules in conventional SL and in our TD algorithm under linear function approximation. 
At time step $t$, the \textbf{conventional updating rule} based on stochastic gradient descent (SGD) is
%\begin{small}
\begin{align}\label{eq:sl-lr}
    \vw &\gets \vw + \alpha (y_t - \vx_t^\top \vw) \vx_t, 
\end{align}
%\end{small}%
while our \textbf{TD updating rule} is
%\begin{small}
\begin{align}\label{td-lr}
    \vw &\gets \vw + \alpha (y_{t,td} - \vx_t^\top \vw) \vx_t
    \\
    \text{where }\ 
    y_{t,td} &\defeq r_{t+1} + \gamma \textcolor{red}{\widehat{y}_{t+1}}
    =  y_t - \gamma y_{t+1} 
    + \gamma \textcolor{red}{\vx_{t+1}^\top \vw} 
    \label{eq:td-target}
\end{align}
%\end{small}%
and $\vx_{t+1} \sim P(\cdot|\vx_t)$ with the ground truth label $y_{t+1}$.
The critical difference is that TD uses a bootstrap, so it does not cancel the $\gamma y_{t+1}$ term from the reward when computing the TD training target $y_{t,td}$. 
By setting $\gamma=0$, the original supervised learning updating rule (\ref{eq:sl-lr}) is recovered. 

\textbf{Generalized TD: An extension to general learning tasks}. 
A natural question regarding TD is how to extend it to different types of data, such as those with counting, binary, or multiclass labels. 
Recall that in generalized linear models~(GLMs), it is assumed that the output variable $y\in\Ycal$ follows an exponential family distribution. 
In addition, there exists an \emph{inverse link function} $f$ that maps a linear prediction $z\defeq\vw^\top\vx$ to the output/label space $\Ycal$ (i.e., $f(z)\in\Ycal,\forall z\in\RR$). 
Examples of GLMs include linear regression (where $y$ follows a Gaussian distribution, $f$ is the identity function and the loss is the squared loss) and logistic regression (where $y$ is Bernoulli, $f$ is the sigmoid function and the loss is the log loss). 
More generally, the output may be in higher-dimensional space and both $z$ and $y$ will be vectors instead of scalars. 
As an example, multinomial regression uses the softmax function $f$ to convert a vector $\vz$ to another vector $\vy$ in the probability simplex. 
Interested readers can refer to \citet{banerjee2005clustering,helmbold1999relative}; \citet[Table~2.1]{mccullagh1989generalized} for more details. 
As per convention, we refer to $z$ as \textbf{logit}. 

Returning to the TD algorithm, the significance of logit $z$ is that it is naturally \emph{additive on the real line} (e.g., in logistic regression, a larger logit indicates a higher chance of being in the positive class), which mirrors the additive nature of returns (cumulative sum of rewards) in RL. 
It also implies that two linear predictions can be added and the resultant $z=z_1+z_2$ can still be transformed to a valid output $f(z)\in\Ycal$. 
In contrast, adding two labels does not necessarily produce a valid label in general $y_1+y_2\notin\Ycal$. 
Therefore, the idea is to construct a bootstrapped target in the real line (logit space, or $z$-space)
\begin{align}
z_{t,td}
&\defeq r_{t+1} + \gamma \widehat{z}_{t+1} = (z_t - \gamma z_{t+1}) + \gamma \xvec_{t+1}^\top \wvec
% \widehat{z}_{t+1}, \widehat{z}_{t+1}=
\end{align}
and then convert it back to the original label space to obtain the TD target $y_{t,td}=f(z_{t,td})$. 
In multiclass classification problems, we often use a one-hot vector to represent the original training target. 
For instance, in the case of MNIST~\citep{mnist}, the target is a ten-dimensional one-hot vector. 
Consequently, the reward becomes a vector with each component corresponding to a class. 
This can be interpreted as evaluating the policy under ten different reward functions in parallel and selecting the highest value for prediction.

\cref{alg_gltd} provides the pseudo-code of our algorithm when using linear models. 
At time step $t$, the process begins by sampling the state $\vx_t$, and then we sample the next state according to the predefined $P$. 
The reward is computed as the difference in logits after converting the original labels $y_t,y_{t+1}$ into the logit space with the link function. 
Subsequently, the TD bootstrap target is constructed in the logit space. 
Finally, the TD target is transformed back to the original label space before it is used to calculate the loss.
Note that the standard regression is a special case where the (inverse) link function is simply the identity function, so it reduces to the standard update (\ref{eq:sl-lr}) with squared loss. 
In practice, we might need some smoothing parameter when the function $f^{-1}$ goes to infinity. For example, in binary classification, $\Ycal=\{0, 1\}$ and the corresponding logits are $z=-\infty$ and $z=\infty$. 
To avoid this, we subtract/add a small value to the label before applying $f^{-1}$ so that $z$ is finite. 
In case of non-linear models such as deep neural networks (DNNs), the update to the parameters $\vw$ can be carried out by gradient descent on the corresponding loss function, replacing $y_t$ with $f(z_{t,td})$.

\begin{algorithm}
% \small 
\setstretch{1.1}
\setlength{\textfloatsep}{0pt}% Remove \textfloatsep
  \begin{algorithmic}\caption{Generalized TD for SL}
  \label{alg_gltd}
  \STATE \textbf{Input:} A dataset $\Dcal$ 
  % batch-size $b$; 
  \STATE Initialize $\vw_0$
  \STATE Randomly sample a starting data point $(\vx_0, y_0)\in \mathcal{D}$. (One can also use mini-batch starting points in NNs.)
  \FOR {$t = 0, 1, 2, \ldots$}
    \STATE Sample $\vx_{t+1} \sim P(\cdot|\vx_t)$, let $y_{t+1}$ be its label
    \STATE $r_{t+1} = f^{-1}(y_t) - \gamma f^{-1}(y_{t+1})$ \textcolor{red}{// $f^{-1}$ converts label to logits}
    \STATE $z_{t,td} = r_{t+1} + \gamma \vx_{t+1}^\top \vw_t$ \textcolor{red}{// Bootstrap target, a separate target network is needed in DNNs}
    \STATE 
    %$\vw \gets \vw - \alpha \nabla l(f(\vx_t^\top\vw),f(z_{t,td}))$
    $\vw_{t+1} \gets \vw_t - \alpha (f(\vx_t^\top\vw_t)-f(z_{t,td})) \xvec_t$
    \textcolor{red}{// $f$ converts logits back to label}  
  \ENDFOR
  \end{algorithmic}
\setlength{\textfloatsep}{0pt}% Remove \textfloatsep
\end{algorithm}

We use the term \emph{generalized} as it corresponds to generalized linear models in SL. 
This usage should be distinguished from a similarly named concept in \citet{hado2019generalbellman} within the realm of RL. 
Our approach applies RL to address problems typically considered within the scope of SL. 
This integration is non-trivial and unconventional, as SL is commonly viewed as a one-shot prediction problem. 
Instead, we have introduced a sequential perspective to SL. 
Moreover, the function $f(\cdot)$ in our model and the function $h(\cdot)$ in \citet{hado2019generalbellman} serve distinct purposes. 
In our framework, $f(\cdot)$ is employed to convert logits into appropriate outputs, such as softmax scores. 
In contrast, $h(\cdot)$ in \citet{hado2019generalbellman} and the logarithm used in \citet{harm2019logmapping} primarily address issues related to the scale of rewards.

\section{MRP v.s. I.I.D.\ for Supervised Learning}
\label{sec:beliefcompare}

As our MRP formulation is fundamentally different from the traditional i.i.d.\ view, this section discusses the property of TD's solution and its potential benefits in the linear setting and beyond. 
The section concludes with a discussion on the merits of adopting an MRP versus an i.i.d.\ perspective.

\subsection{Connections between TD and OLS}
\label{sec:min_norm_equivalence}

The following proposition characterizes the connections between TD and OLS in the linear setting. Proof is in \cref{sec:app-proofs}.
\begin{restatable}{proposition}{tdolsequiprop}[Connection between TD and OLS]
\label{prop:equiv}
When $\Dmat$ has full support and $\Xmat$ has linearly independent rows, TD and OLS have the same minimum norm solution. 
Moreover, any solution to the linear system $\Xmat \wvec = \yvec$ must also be a solution to TD's linear system $\Amat \wvec = \bvec$ as defined in \cref{eq:A_b_def}.
\end{restatable}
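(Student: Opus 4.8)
The plan is to first pin down the exact reward vector that the MRP construction induces, and then handle the two assertions in turn, using the second (the substitution identity) as the workhorse for the first (the minimum-norm equivalence). The construction in \cref{eq:reward_def} sets $r(s_i) = v(s_i) - \gamma\,\EE_{S'\sim P(\cdot\mid s_i)}[v(S')]$, and in the MRP view the state value of the $i$th point is exactly its label, $v(s_i)=y_i$. Stacking these equations and reading off the transition matrix $\Pmat$ gives the clean identity $\vr = (\Imat-\gamma\Pmat)\vy$. Substituting into the definition of $\vb$ in \cref{eq:A_b_def} then yields $\vb = \Xmat^\top\Dmat(\Imat-\gamma\Pmat)\vy$. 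This is the only genuinely ``model-specific'' input; everything after it is linear algebra.

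For the final (``Moreover'') statement I would argue by direct substitution. Take any $\vw$ with $\Xmat\vw=\vy$ and compute
\[
\Amat\vw = \Xmat^\top\Dmat(\Imat-\gamma\Pmat)\Xmat\vw = \Xmat^\top\Dmat(\Imat-\gamma\Pmat)\vy = \vb ,
\]
where the middle equality just replaces $\Xmat\vw$ by $\vy$ and the last uses the form of $\vb$ derived above. Hence $\vw$ solves the TD system $\Amat\vw=\vb$. Notably this needs no hypotheses at all beyond the existence of such a $\vw$ (it is vacuous when $\Xmat\vw=\vy$ is inconsistent), which is why it holds in the generality stated.

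For the equivalence of minimum-norm solutions I would first establish $\ker(\Amat)=\ker(\Xmat)$. The hypotheses supply the three invertibility facts that chain together: linear independence of the rows of $\Xmat$ makes $\Xmat^\top$ injective; full support of $\Dmat$ makes $\Dmat$ invertible; and $\gamma\in(0,1)$ with $\Pmat$ row-stochastic makes $\Imat-\gamma\Pmat$ invertible, since the spectral radius of $\gamma\Pmat$ is below one. Peeling these off in order shows $\Amat\vw=\zerovec \iff (\Imat-\gamma\Pmat)\Xmat\vw=\zerovec \iff \Xmat\vw=\zerovec$, so the two kernels coincide and $\ker(\Amat)^\perp=\ker(\Xmat)^\perp=\mathrm{range}(\Xmat^\top)$. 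The minimum-norm solution of any consistent linear system is the unique solution lying in the orthogonal complement of its kernel; the OLS interpolating solution $\Xmat^\pinv\vy=\Xmat^\top(\Xmat\Xmat^\top)^{-1}\vy$ lies in $\mathrm{range}(\Xmat^\top)$ and, by the substitution identity just proved, also solves $\Amat\vw=\vb$. Being in $\ker(\Amat)^\perp$ and solving the TD system, it is the unique minimum-norm TD solution, and it is by construction the minimum-norm OLS solution, so the two agree. The substitution step is a one-liner and the only place demanding care is the kernel identification — specifically the invertibility of $\Imat-\gamma\Pmat$, so I would make sure each hypothesis ($\Dmat$ full support, $\Xmat$ full row rank, $\gamma<1$) is invoked exactly where it is needed and nowhere relied on implicitly.
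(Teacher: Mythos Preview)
Your proof is correct. The substitution argument for the ``Moreover'' clause matches the paper's exactly (the paper phrases it as preconditioning $\Xmat\vw=\vy$ by $\Xmat^\top\Smat$ with $\Smat=\Dmat(\Imat-\gamma\Pmat)$, which is the same one-line computation you wrote). For the minimum-norm equivalence, however, you take a genuinely different route. The paper invokes a pseudoinverse product identity (Greville, 1966): under the stated hypotheses $(\Xmat^\top\Smat\Xmat)^\pinv=\Xmat^\pinv\Smat^{-1}(\Xmat^\top)^\pinv$, and then computes $\Amat^\pinv\vb$ directly, cancelling $\Smat$ and using $(\Xmat^\top)^\pinv\Xmat^\top=\Imat_n$ to land on $\Xmat^\pinv\vy$. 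Your argument instead identifies $\ker(\Amat)=\ker(\Xmat)$ by peeling off the injective factors $\Xmat^\top$, $\Dmat$, $\Imat-\gamma\Pmat$, and then uses the characterization of the minimum-norm solution as the unique solution in $\ker(\Amat)^\perp$ together with the already-proved substitution identity. Your approach is more elementary in that it avoids citing a pseudoinverse factorization theorem and makes transparent exactly where each hypothesis enters; the paper's approach is more computational and slightly shorter once the Greville identity is granted. Either way the content is the same: full row rank of $\Xmat$ plus invertibility of $\Smat$ is what collapses the TD normal equations onto the OLS ones.
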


\textbf{Empirical verification.} 
Table~\ref{tab:algo_relations} illustrates the distance between the closed-form solutions of our linear TD and OLS under various choices of the transition matrix by using a synthetic dataset (details are in Appendix \ref{sec:app-min-norm}). For \texttt{Random}, each element of $\Pmat$ is drawn from the uniform distribution $U(0, 1)$ and then normalized so that each row sums to one. 
The \texttt{Deficient} variant is exact the same as \texttt{Random}, except that the last column is set to all zeros before row normalization. 
This ensures that the last state is never visited from any state, thus not having full support in its stationary distribution. 
\texttt{Uniform} simply means every element of $\Pmat$ is set to $1/n$ where $n=100$ is the number of training points.
\texttt{Distance (Close)} assigns higher transition probability to points closer to the current point in the label space. 
Finally, \texttt{Distance (Far)} means contrary to \texttt{Distance (Close)}.
The last two variants are used to see if similarity between points in the label space can play a role in the transition when using our TD algorithm.

Two key observations emerge: 
1) As the feature dimension increases towards the overparameterization regime, both solutions become nearly indistinguishable, implying that designing $\Pmat$ may be straightforward when employing a powerful model like NN. 
2) Deficient choices for $\Pmat$ with non-full support can pose issues and should be avoided. 
In practice, one might opt for a computationally and memory-efficient $\Pmat$, 
such as a uniform constant matrix where every entry is set to $1/n$.
Such a matrix is ergodic and, therefore, not deficient. We will delve deeper into the selection of $\Pmat$ below. 

\begin{table*}[t]
\caption{ 
Distance between the closed-form min-norm solutions of TD and OLS $\|\vw_{TD}-\vw_{LS}\|_2$. 
Input matrix $\Xmat$ has normally distributed features with dimension $d\in \{70, 90, 110, 130\}$.
Results are average over $10$ runs with standard error in bracket.
Details can be found in Appendix \ref{sec:app-min-norm}.} 
\label{tab:algo_relations}
%\vspace{-0.6cm}
\setlength{\tabcolsep}{5pt}
\centering
\begin{tabular}{c|c|c|c|c}
\hline
\diagbox{Transition}{Dimension}
& 70 & 90 & 110 & 130 
\\ \hline
Random & 0.027 {\tiny ($\pm$ 0.01)} & 0.075 {\tiny ($\pm$ 0.02)} & $\le 10^{-10}$ {\tiny ($\pm$ 0.00)} & $\le 10^{-10}$ {\tiny ($\pm$ 0.00)}\\\hline
Uniform & 0.026 {\tiny ($\pm$ 0.01)} & 0.074 {\tiny ($\pm$ 0.01)} & $\le 10^{-10}$ {\tiny ($\pm$ 0.00)} & $\le 10^{-10}$ {\tiny ($\pm$ 0.00)}\\\hline
Distance (Far) & 0.028 {\tiny ($\pm$ 0.01)} & 0.075 {\tiny ($\pm$ 0.01)} & $\le 10^{-10}$ {\tiny ($\pm$ 0.00)} & $\le 10^{-10}$ {\tiny ($\pm$ 0.00)}\\\hline
Distance (Close) & 0.182 {\tiny ($\pm$ 0.10)} & 0.249 {\tiny ($\pm$ 0.04)} & $\le 10^{-10}$ {\tiny ($\pm$ 0.00)} & $\le 10^{-10}$ {\tiny ($\pm$ 0.00)}\\\hline
Deficient & 0.035 {\tiny ($\pm$ 0.01)} & 0.172 {\tiny ($\pm$ 0.04)} & 0.782 {\tiny ($\pm$ 0.18)} & 0.650 {\tiny ($\pm$ 0.25)}\\\hline
\end{tabular}
%\vspace{-0.4cm}
\end{table*}

\subsection{Statistical Efficiency and Variance Analysis}
\label{sec:variance_analysis}

A natural question is under what conditions the TD solution is better than the OLS solution, especially given that they may find the same solution under conditions specified above. 
Although the OLS estimator is known to be the best linear unbiased estimator (BLUE) under the i.i.d.\ assumption, our TD algorithm demonstrates the potential for a lower variance in settings with correlated noise, even when using a simple uniform transition matrix. 

Recall that conventional linear models assume $y_i = \vx_i^\top\vw^* + \epsilon_i$, where $\vw^*$ is the true parameters and $\epsilon_i$ is assumed to be independent noise. 
In contrast, under the MRP perspective, we consider the possibility of correlated noise. 
The following proposition shows when a TD estimator will be the most efficient one: 

\begin{proposition}
\label{prop:efficiency}
Suppose $\Amat$ as in \cref{eq:A_b_def} is invertible
and the error vector $\vepsilon$ satisfies $\EE[\vepsilon|\Xmat]=\zerovec$ and $\cov[\vepsilon|\Xmat]=\Cmat$. 
Then $\EE[\vw_{TD}|\Xmat]=\vw^*$ and the conditional covariance is
\begin{align}
\cov[\vw_{TD}|\Xmat]
=\Amat^{-1}\Xmat^\top\Smat
\Cmat
\Smat^\top\Xmat(\Amat^{-1})^{\top}
\end{align}
where $\Smat\defeq \Dmat(\Imat-\gamma \Pmat)$.
Moreover, if $\Smat=\Smat^\top$, the TD estimator is the BLUE for problems with $\Cmat=c\cdot\Smat^{-1},\forall c>0$. 
\end{proposition}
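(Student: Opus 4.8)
The plan is to obtain a closed form for the TD estimator $\vw_{TD}=\Amat^{-1}\vb$, read off its mean and covariance, and then establish optimality by comparing it against an arbitrary linear unbiased competitor. First I would express $\vb$ in terms of the true parameter and the noise. Writing the labels as $\vy=\Xmat\vw^*+\vepsilon$ and recalling that the reward vector is defined through the Bellman equation \cref{eq:reward_def} as $\vr=(\Imat-\gamma\Pmat)\vy$, substitution into $\vb=\Xmat^\top\Dmat\vr$ together with $\Amat=\Xmat^\top\Smat\Xmat$ and $\Smat=\Dmat(\Imat-\gamma\Pmat)$ gives
\begin{align}
\vb=\Xmat^\top\Dmat(\Imat-\gamma\Pmat)\vy=\Amat\vw^*+\Xmat^\top\Smat\vepsilon .
\end{align}
Hence $\vw_{TD}=\Amat^{-1}\vb=\vw^*+\Amat^{-1}\Xmat^\top\Smat\,\vepsilon$, which displays the estimator as $\vw^*$ plus a fixed linear image of the noise.

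From this representation the first two assertions follow mechanically. Taking conditional expectations and using $\EE[\vepsilon\mid\Xmat]=\zerovec$ gives $\EE[\vw_{TD}\mid\Xmat]=\vw^*$, so the estimator is unbiased. For the covariance I would apply the rule $\cov[\Mb\,\vepsilon\mid\Xmat]=\Mb\,\Cmat\,\Mb^\top$ with $\Mb=\Amat^{-1}\Xmat^\top\Smat$, which immediately yields $\cov[\vw_{TD}\mid\Xmat]=\Amat^{-1}\Xmat^\top\Smat\,\Cmat\,\Smat^\top\Xmat(\Amat^{-1})^\top$, matching the stated formula.

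The substantive part is the BLUE claim under $\Smat=\Smat^\top$ and $\Cmat=c\,\Smat^{-1}$. I would first record that symmetry of $\Smat$, combined with the standard positive-definiteness of the TD matrix $\Dmat(\Imat-\gamma\Pmat)$ (so that $\Smat$ equals its own symmetric part and is therefore positive definite), makes $\Cmat=c\,\Smat^{-1}$ a legitimate symmetric positive definite covariance for every $c>0$. Writing $\vw_{TD}=\Mb_{TD}\vy$ with $\Mb_{TD}=\Amat^{-1}\Xmat^\top\Smat$, note $\Mb_{TD}\Xmat=\Imat$. Any linear unbiased estimator is $\hat{\vw}=\Mb\vy$ with the unbiasedness constraint $\Mb\Xmat=\Imat$, so setting $\Nb=\Mb-\Mb_{TD}$ gives $\Nb\Xmat=\zerovec$. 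Expanding $\cov[\hat{\vw}\mid\Xmat]=c\,\Mb\Smat^{-1}\Mb^\top$, the two cross terms are transposes of one another because $\Smat^{-1}$ is symmetric, and each vanishes since $\Mb_{TD}\Smat^{-1}=\Amat^{-1}\Xmat^\top$ and $\Amat^{-1}\Xmat^\top\Nb^\top=\Amat^{-1}(\Nb\Xmat)^\top=\zerovec$. This leaves
\begin{align}
\cov[\hat{\vw}\mid\Xmat]=\cov[\vw_{TD}\mid\Xmat]+c\,\Nb\Smat^{-1}\Nb^\top ,
\end{align}
and since $\Smat^{-1}$ is positive definite the added term is positive semidefinite, so $\vw_{TD}$ has the smallest covariance among linear unbiased estimators, i.e.\ it is the BLUE. (Equivalently, one may observe that under $\Cmat=c\,\Smat^{-1}$ the TD estimator coincides with the generalized least squares estimator $(\Xmat^\top\Cmat^{-1}\Xmat)^{-1}\Xmat^\top\Cmat^{-1}\vy$ and invoke Aitken's generalized Gauss--Markov theorem.)

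The main obstacle is structural rather than computational: one must guarantee that $\Cmat=c\,\Smat^{-1}$ is a genuine symmetric positive definite covariance, which is exactly why the hypothesis $\Smat=\Smat^\top$ is imposed and why it must be paired with positive-definiteness of the TD matrix. The same symmetry drives the cross-term cancellation, since it is what makes the two cross terms mutually transposed; I would therefore track carefully where $\Smat=\Smat^\top$ is used and confirm that the optimality argument genuinely degrades when $\Smat$ is positive definite but asymmetric, consistent with the proposition restricting the BLUE claim to the symmetric case.
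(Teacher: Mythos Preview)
Your proposal is correct. The derivation of unbiasedness and the covariance formula is identical to the paper's argument: both write $\vw_{TD}-\vw^*=\Amat^{-1}\Xmat^\top\Smat\,\vepsilon$ and read off the moments directly.

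For the BLUE claim, your route differs from the paper's. The paper, under $\Smat=\Smat^\top$ and $\Cmat=c\,\Smat^{-1}$, takes a Cholesky factorization $\Smat=\Lmat\Lmat^\top$ and rewrites the problem as $\widetilde{\vy}=\widetilde{\Xmat}\vw+\widetilde{\vepsilon}$ with $\widetilde{\vy}=\Lmat^\top\vy$, $\widetilde{\Xmat}=\Lmat^\top\Xmat$, $\widetilde{\vepsilon}=\Lmat^\top\vepsilon$; in this rescaled problem the noise has covariance $c\,\Imat$, the TD estimator coincides with OLS, and classical Gauss--Markov finishes the job. You instead compare $\vw_{TD}$ directly to an arbitrary linear unbiased estimator $\Mb\vy$ and show the cross terms vanish via $\Mb_{TD}\Smat^{-1}=\Amat^{-1}\Xmat^\top$ and $\Nb\Xmat=\zerovec$, which is essentially a self-contained proof of Aitken's theorem specialized to this setting (and you also note the GLS identification as an alternative). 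Both arguments are sound; yours is more direct and makes explicit where symmetry of $\Smat$ is consumed (both to make $\Cmat=c\,\Smat^{-1}$ a valid covariance and to kill the cross terms), while the paper's whitening argument has the conceptual payoff of exhibiting TD as ordinary OLS after a change of coordinates.
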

\begin{proof}
Recall from \cref{eq:A_b_def} that $\vw_{TD}=\Amat^{-1}\vb, \vb = \Xmat^\top \Dmat \vr$, and $\vr=(\Imat-\gamma \Pmat)\vy$. 
Therefore $\vw_{TD}-\vw^*$ equals
\begin{align}
\vw_{TD}-\vw^* &= \Amat^{-1} (\vb - \Amat \vw^*)= \Amat^{-1} (\Xmat^\top \Dmat \vr - \Amat \vw^*)\\
&=\Amat^{-1} (\Xmat^\top \Dmat (\Imat-\gamma \Pmat)\vy - \Amat \vw^*) \\
&= \Amat^{-1} (\Xmat^\top \Smat \vy - \Xmat^\top \Smat \Xmat \vw^*) =\Amat^{-1} \Xmat\Smat\vepsilon 
\end{align}
where we define $\Smat\defeq \Dmat(\Imat-\gamma \Pmat)$. 
When $\EE[\vepsilon|\Xmat]=\zerovec$, the conditional expectation of the above equation is zero.
Thus $\vw_{TD}$ is conditionally unbiased and its conditional covariance is
\begin{align}
\cov[\vw_{TD}|\Xmat]
&=\Amat^{-1}\Xmat^\top\Smat
\cdot\cov[\vepsilon|\Xmat]\cdot
\Smat^\top\Xmat(\Amat^{-1})^{\top}
\\
&=\Amat^{-1}\Xmat^\top\Smat
\cdot\Cmat\cdot
\Smat^\top\Xmat(\Amat^{-1})^{\top}
\end{align}
Finally, when $\Smat=\Smat^\top$ and $\Cmat=c\cdot\Smat^{-1}$ for some $c>0$, 
the TD estimator and its covariance become
\begin{align}
\vw_{TD}
=(\Xmat^{\top}\Smat\Xmat)^{-1}\Xmat\Smat\vy
\qquad \cov[\vw_{TD}|\Xmat]
=(c\Xmat^\top\Smat\Xmat)^{-1}.
\end{align}
By using the Cholesky decomposition $\Smat=\Lmat\Lmat^\top$, one can see that the TD estimator is equivalent to the OLS solution to a rescaled problem
$\widetilde{\vy}=\widetilde{\Xmat}\vw+\widetilde{\vepsilon}$ where
\begin{align}
\widetilde{\vy}
=\Lmat^\top\vy
\qquad \widetilde{\Xmat}
=\Lmat^\top\Xmat
\qquad \widetilde{\vepsilon}
=\Lmat^\top\vepsilon
\end{align}
Here $\cov[\widetilde{\vepsilon}|\Xmat]=\Lmat^\top\Cmat\Lmat=c\Imat$ so the TD estimator is the OLS solution to this problem and thus is the BLUE.
\end{proof}

\textbf{Remark 1}. 
This proposition identifies a situation in which our TD estimator outperforms other estimators, OLS included, in terms of efficiency. 
The condition $\Smat=\Smat^\top$ is needed so that there exists a corresponding symmetric covariance matrix $\Cmat$ for the data.
This condition implies that $\Dmat\Pmat=\Pmat^\top\Dmat$, or $D(s_i)P(s_j|s_i)=D(s_j)P(s_i|s_j)$, which means the Markov chain is reversible (i.e., detailed balance). 
Also note that $\Smat,\Amat$ are invertible under mild conditions (e.g., ergodic Markov chain). 
In such cases, the TD estimator also corresponds to the generalized least squares~(GLS) estimator~\citep{aitken1936ls,kariya2004generalized}. 
In practice, 
one may be tempted to directly estimate the covariance matrix as done by feasible GLS (FGLS)~\citep{badi2008econ}. 
However, estimating a covariance matrix is non-trivial as demonstrated in \cref{sec:experiment}. 
Furthermore, it should be emphasized that GLS/FGLS methods do not naturally support incremental learning, nor are they readily adaptable to deep learning models. 

\textbf{Remark 2.} The benefits of TD may not be limited to the situations described by the above proposition. 
It is challenging to mathematically describe these benefits because, under a more general $\Smat$, there is no intuitively interpretable form of the variance of TD's solution.

Furthermore, TD's benefits can be expanded by incorporating its more recent variants, such as emphatic TD~\citep{sutton2016etd} with transition-based $\gamma$ and interest weighting, which may offer more flexibility in designing the matrix $\Smat$. Other variants include gradient TD~\citep{maei2011gradient} and quasi-second order accelerated TD~\citep{pan2017atd,pan2017sketchatd}. 
We leave these additional studies for future work.

Below, we provide a more general perspective to understand the benefits of TD in terms of variance reduction.
The basic idea is that when the ground truth target variables of consecutive time steps are positively correlated, the TD target benefits from a reduction in variance.

\begin{proposition}
\label{prop:variance}
Assume the estimated next-state value $\widehat{y}_{t+1}$ satisfies $\widehat{y}_{t+1}=\EE[y_{t+1}]+\epsilon$ where $\epsilon$ is some independent noise with zero mean and standard deviation $\sigma_\epsilon$. 
Let $\sigma_t, \sigma_{t+1}$ be the standard deviations of $y_t, y_{t+1}$ respectively, and $\rho_{t,t+1}$ be the Pearson correlation coefficient between $y_{t}$ and $y_{t+1}$. 
If $\rho_{t,t+1}\ge \frac{\gamma^2 (\sigma_{t+1}^2 + \sigma_\epsilon^2)}{2\gamma \sigma_t \sigma_{t+1}}$, then $\var(y_{t,td}) \le \var(y_t)$.
\end{proposition}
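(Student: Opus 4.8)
The plan is to reduce the claim to a single variance computation by expanding the TD target into random terms plus a deterministic constant. Starting from the regression target in \cref{eq:td-target}, the reward is $r_{t+1} = y_t - \gamma y_{t+1}$, and substituting the assumed form $\widehat{y}_{t+1} = \EE[y_{t+1}] + \epsilon$ gives
\[
y_{t,td} = y_t - \gamma y_{t+1} + \gamma\bigl(\EE[y_{t+1}] + \epsilon\bigr).
\]
Since $\gamma\,\EE[y_{t+1}]$ is a deterministic constant, it contributes nothing to the variance, so the task collapses to computing $\var(y_t - \gamma y_{t+1} + \gamma\epsilon)$.

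First I would apply the bilinearity of covariance and exploit the independence of $\epsilon$ from both $y_t$ and $y_{t+1}$, so that all cross-covariances involving $\epsilon$ vanish and it contributes only its own scaled variance $\gamma^2\sigma_\epsilon^2$. This yields
\[
\var(y_{t,td}) = \var(y_t) - 2\gamma\,\cov(y_t, y_{t+1}) + \gamma^2\var(y_{t+1}) + \gamma^2\sigma_\epsilon^2.
\]
Rewriting the covariance through the Pearson coefficient, $\cov(y_t,y_{t+1}) = \rho_{t,t+1}\sigma_t\sigma_{t+1}$, and substituting the variances $\sigma_t^2,\sigma_{t+1}^2$ gives
\[
\var(y_{t,td}) = \sigma_t^2 - 2\gamma\rho_{t,t+1}\sigma_t\sigma_{t+1} + \gamma^2\sigma_{t+1}^2 + \gamma^2\sigma_\epsilon^2.
\]

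Finally I would impose $\var(y_{t,td}) \le \var(y_t) = \sigma_t^2$; the $\sigma_t^2$ terms cancel, leaving
\[
\gamma^2\bigl(\sigma_{t+1}^2 + \sigma_\epsilon^2\bigr) \le 2\gamma\rho_{t,t+1}\sigma_t\sigma_{t+1},
\]
and dividing by the strictly positive quantity $2\gamma\sigma_t\sigma_{t+1}$ (valid because $\gamma>0$ and the standard deviations are positive) recovers exactly the stated threshold on $\rho_{t,t+1}$. The argument is essentially a one-line variance expansion, so I do not anticipate a serious obstacle; the only points demanding care are tracking the sign of the $-2\gamma\,\cov(y_t,y_{t+1})$ cross-term and invoking the independence of $\epsilon$ to discard the cross-covariances, leaving its contribution as the additive $\gamma^2\sigma_\epsilon^2$ penalty that the positive correlation must overcome.
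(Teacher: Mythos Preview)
Your proposal is correct and essentially identical to the paper's own proof: both expand $y_{t,td}$ into $y_t - \gamma y_{t+1} + \gamma\EE[y_{t+1}] + \gamma\epsilon$, drop the constant, compute the variance using independence of $\epsilon$ and the Pearson-coefficient form of the covariance, and then rearrange the inequality. The only cosmetic difference is that the paper frames the expansion as treating $y_{t+1}$ as a control variate, but the computation is the same.
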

%\vspace{-0.2cm}
\begin{proof}
We rewrite the TD target (\ref{eq:td-target}) as
\begin{align}
y_{t,td}
&=(y_{t}-\gamma y_{t+1})+\gamma\EE[y_{t+1}]+\gamma \epsilon
=y_{t}-\gamma(y_{t+1} - \EE[y_{t+1}])+\gamma \epsilon
\end{align}
This means we can treat $y_{t+1}$ as a control variate and the variance of this estimate is
\begin{align}
\var(y_{t,td})
&=
\var(y_{t})
+\gamma^2 \var(y_{t+1})
-2\gamma \cov(y_t, y_{t+1})
+\gamma^2 \var(\epsilon)
\\
&=
\sigma_{t}^2
+\gamma^2 \sigma_{t+1}^2
-2\gamma \rho_{t,t+1}\sigma_{t}\sigma_{t+1}
+\gamma^2 \sigma_\epsilon^2
\end{align}
Plugging into the condition on $\rho_{t,t+1}$ would yield $\var(y_{t,td}) \le \var(y_t)$.
\end{proof}

\textbf{Remark.} To better interpret the result, we can consider $\sigma_t=\sigma_{t+1}$, then the variance is simplified to
\begin{align}
\var(y_{t,td})
=
\sigma_{t}^2
+\gamma^2 \sigma_{t}^2
-2\gamma \rho_{t,t+1}\sigma_{t}^2
+\gamma^2 \sigma_\epsilon^2,
\end{align}
which is a convex function in $\gamma$ and achieves its lowest value when 
\begin{align}
\gamma=\frac{\rho_{t,t+1}\sigma_t^2}{\sigma_t^2 + \sigma_\epsilon^2} \propto \rho_{t,t+1}.
\end{align}
This suggests that the stronger the correlation, the more (i.e., a larger $\gamma$) we might rely on the bootstrap term to reduce variance, coinciding with our intuition. 

\textbf{Empirical verification}.
Here we verify that when the outputs are indeed positively correlated, our method can generalize better than OLS.  
To this end, we run experiments using a Gaussian process with positive correlated outputs. 

In each run, we jointly sample $n=200$ data points (100 for training and 100 for test) from a Gaussian process where each element of the input matrix $\Xmat$ is drawn from the standard normal distribution $\Ncal(0,1)$. 
The input dimension is $d=70$.
For the outputs, the mean function is given by $m(\vx)=\vx^\top\vw^*$ where $\vw^*$ is a vector of all ones and the covariance matrix is a block diagonal matrix 
\begin{equation}
\Cmat=
\begin{bmatrix}
\widetilde{\Cmat}\\
&\ddots\\
&&\widetilde{\Cmat}
\end{bmatrix}_{200\times 200}
\quad\text{ with }\quad
\widetilde{\Cmat}=
\begin{bmatrix}
1 & \rho & \dots & \rho \\
\rho & 1 & \dots & \rho \\
\vdots & \vdots & \ddots & \vdots \\
\rho & \rho & \dots & 1 \\
\end{bmatrix}_{10\times 10}
\end{equation}
where $\rho$ is a tuning parameter for the correlation. 
When $\rho>0$, this structure ensures that all ten points within the same ``cluster'' are positively correlated.
Finally, we add an independent noise (with zero mean and standard deviation of 0.1) to each output before using them for training and testing. 

For our TD method, we set $\gamma=0.99$ and design the probability matrix as an interpolation between a covariance matrix $\Cmat$ and $\onevec-\Cmat$, defined as \( \Pmat = (1-\eta)(\onevec-\Cmat) + \eta \Cmat \), followed by normalization to ensure that it forms a valid stochastic matrix. We vary \(\eta\) over the set \{0.5, 0.6, 0.7, 0.8, 0.9\} and the correlation coefficient \(\rho\) over \{0.1, 0.3, 0.5, 0.7, 0.9\}.
As $\eta$ gets closer to one, the transition matrix agrees more with the covariance matrix so that it is more likely to transition to positively correlated points. 
With 100 training points, we learn both the TD solution $\vw_{TD}$ and OLS solution $\vw_{LS}$ and plot their test RMSE.
The experiment is repeated 50 times and \cref{fig:gp_P} reports the mean and standard deviation for different $(\rho,\eta)$ values.

\begin{figure}
  \begin{center}
    \includegraphics[width=0.7\textwidth]{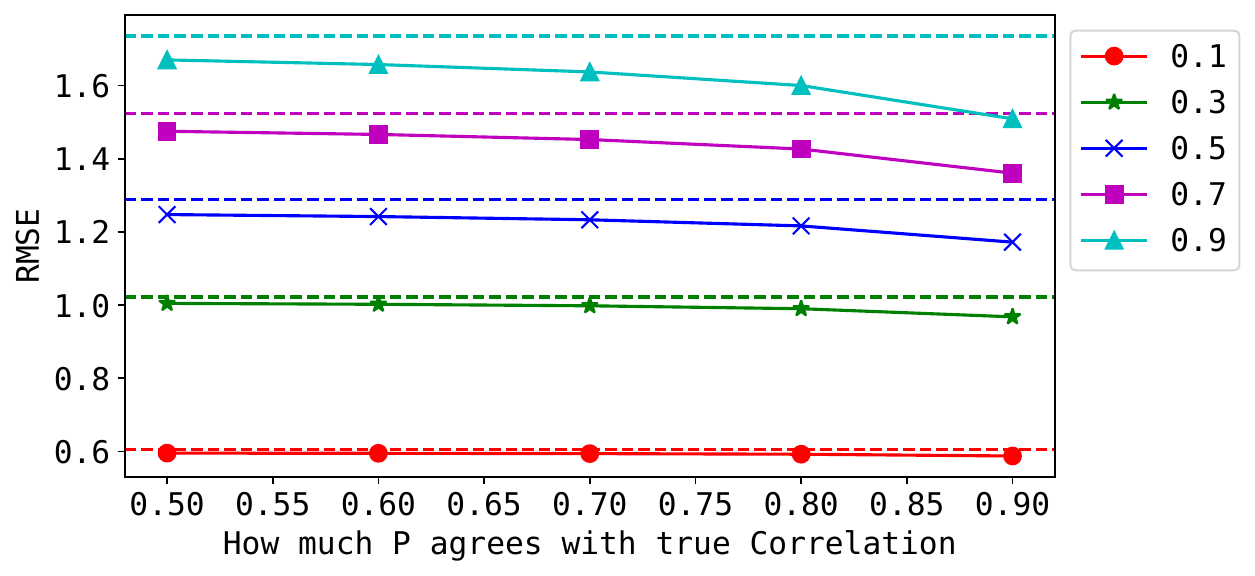}
  \end{center}
  \caption{Comparing TD (solid) with OLS (dashed) with different settings.}
  \label{fig:gp_P}
\end{figure}

The results are shown in \cref{fig:gp_P}.
Under different levels of correlation, as indicated by various colors, TD (solid lines) increasingly outperforms OLS (dashed lines) when the transition probability matrix $\Pmat$ more closely aligns with the covariance. 
$\Pmat$ is designed so that a higher transition probability is assigned between two points when their outputs exhibit a stronger positive correlation.
Although there is no direct theoretical result explicitly proving the degradation of OLS performance under increasingly correlated data, existing works, such as Lemma 3 in \citet{song2023hybrid} and Proposition 4 in \citet{kishan2024offonrl}, indirectly suggest that its generalization error may be higher than in the i.i.d.\ setting. 
This is because their upper bounds on the error include a linear dependence on either the number of steps or the sample size, whereas in the i.i.d.\ case, the dependence is typically less costly.

\textbf{Practical choices: MRP view v.s.\ i.i.d.\ view.} 
We conclude this section by discussing the practical considerations on either using the MRP or i.i.d.\ view. 
In machine learning, the selection of algorithms often depends on whether their underlying assumptions align with the ground truth of the data -- a factor that is fundamentally unknowable. 
Thus, the decision often boils down to the user's belief -- whether to treat the data as MRP or i.i.d. 
It is noteworthy that the TD algorithm demonstrates greater generality as it: 1) accommodates the solution of the linear system as demonstrated in \cref{prop:equiv} and is equivalent to OLS under certain conditions; 2) offers a straightforward approach to setting the discount factor to zero, effectively reducing the bootstrap target to the original SL training target.

The theoretical and empirical results from this section suggest: 1) in the absence of prior knowledge, adopting TD with small $\gamma$ values and a uniform $\Pmat$ could be beneficial for computational and memory efficiency since we can simply pick the next data point uniformly at random without storing specific probabilities; if the ground truth suggests that the data points are positively correlated, this approach might yield a performance gain; 2) when it is known that two points are positively correlated, one could enhance variance reduction by strategically encouraging transition from one point to another.

\section{Convergence Analysis}
\label{sec:theory}
In this section, we present convergence results for our generalized TD algorithm (Algorithm~\ref{alg_gltd}) under both the expected updating rule and the sample-based updating rule. Detailed proofs are provided in \cref{sec:app-proofs}.

We show that the expected updating rule of our generalized TD algorithm can be viewed as a generalized Bellman operator (\ref{eq:expected_update}) that is contractive. 
Based on this, we further prove finite-time convergence when applying TD(0) updates with linear function approximation.
We primarily follow the convergence framework presented in \citet{bhandari2018finite}, making non-trivial adaptations due to the presence of the inverse link function.
We start with the assumptions.

\begin{assumption}[Feature regularity]
\label{assume:feature_regularity}
$\forall s\in\Scal,\|\vx(s)\|_2\le 1$ and
the steady-state covariance matrix $\Sigmamat\defeq D(s)\vx(s)\vx(s)^\top$ has full rank.
\end{assumption}

\cref{assume:feature_regularity} is a typical assumption necessary for the existence of the fixed point when there is no transform function~\citep{tsitsiklis1997analysis}. 

\begin{assumption}
\label{assume:link_function}
The inverse link function $f:\RR\mapsto\Ycal$ is continuous, invertible and strictly increasing. 
Moreover, it has a bounded derivative $f'$ on any bounded domain.
\end{assumption}

\textbf{Remark.} 
\cref{assume:link_function} is satisfied for those inverse link functions commonly used in GLMs, including but not limited to identity function (linear regression), exponential function (Poisson regression), and sigmoid function (logistic regression)~\citep[Table~2.1]{mccullagh1989generalized}.

For theoretical analysis, we consider training in a convex and compact set $\Wcal\subset\RR^d$ such that the parameters are projected back to it after each update (see Equations (\ref{eq:expected_update}) and (\ref{eq:sample_update})). 
Without this, one may need to consider certain special cases where the parameters $\vw$ may diverge to infinity, akin to the case of (unconstrained) logistic regression.
With this in mind, the following lemma holds.

\begin{restatable}{lemma}{lipschitzlemma}
\label{lemma:lipschitz}
Under \cref{assume:feature_regularity}, \ref{assume:link_function}, for $\vw\in\Wcal$, there exists $L\ge 1$ such that
$\forall s_1,s_2\in\Scal$ with $z_1=\vx(s_1)^\top\vw,z_2=\vx(s_2)^\top\vw$,
\setlength{\abovedisplayskip}{3pt}
\setlength{\belowdisplayskip}{3pt}
\begin{small}
\begin{align}
%\addlinespace[-2mm]
\frac{1}{L}|z_1-z_2|\le|f(z_1)-f(z_2)|\le L|z_1-z_2|.
\label{eq:bi_lipschitz}
\end{align}
\end{small}
% \vspace{-0.4cm}
\end{restatable}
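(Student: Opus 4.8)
The plan is to exploit that the logits lie in a fixed compact interval, reducing the claim to a two-sided derivative bound for $f$ on that interval. First I would bound the logits uniformly: since $\Wcal$ is compact, $B \defeq \sup_{\vw\in\Wcal}\|\vw\|_2 < \infty$, and by Cauchy--Schwarz together with $\|\vx(s)\|_2\le 1$ from \cref{assume:feature_regularity}, every logit obeys $|z| = |\vx(s)^\top\vw| \le \|\vx(s)\|_2\,\|\vw\|_2 \le B$, so it suffices to prove the bound for $z_1, z_2 \in [-B, B]$. The upper inequality is then immediate from the mean value theorem: by \cref{assume:link_function}, $M \defeq \sup_{|z|\le B}|f'(z)| < \infty$, and $|f(z_1) - f(z_2)| = |f'(\xi)|\,|z_1 - z_2| \le M\,|z_1 - z_2|$ for some $\xi$ between $z_1$ and $z_2$.

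For the lower inequality I would pass to the inverse link $f^{-1}$, which is well defined, continuous, and strictly increasing on $f([-B,B])$ by \cref{assume:link_function}; showing it is $(1/m)$-Lipschitz there is, after the substitution $y_i = f(z_i)$, exactly the desired lower bound. The key quantity is $m \defeq \min_{|z|\le B} f'(z)$, which I claim is strictly positive: for the inverse link functions admitted by \cref{assume:link_function} (identity, exponential, sigmoid), $f'$ is continuous and strictly positive, hence attains a positive minimum on the compact interval $[-B, B]$. Given $m > 0$, the inverse function theorem yields $(f^{-1})'(y) = 1/f'(f^{-1}(y)) \le 1/m$, so $f^{-1}$ is $(1/m)$-Lipschitz and $|f(z_1) - f(z_2)| \ge m\,|z_1 - z_2|$. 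Setting $L \defeq \max\{M, 1/m, 1\}$ delivers both inequalities simultaneously and guarantees $L \ge 1$.

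The step I expect to be the main obstacle is the strictly positive lower bound $m > 0$. Strict monotonicity by itself is insufficient --- $f(z) = z^3$ is continuous, strictly increasing, and invertible with bounded derivative on bounded domains, yet its inverse fails to be Lipschitz near the origin --- so the argument must genuinely use that $f'$ stays bounded away from zero on the compact logit interval $[-B, B]$. Pinning this down is exactly where the concrete structure of the admissible link functions enters, and it is the part I would justify most carefully; everything else follows routinely from compactness of $\Wcal$, the feature bound, and the mean value theorem.
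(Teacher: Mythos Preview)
Your approach is essentially the same as the paper's: both arguments use compactness of $\Wcal$ together with the feature bound to confine the logits to a compact interval, then take $L=\max(L_f,L_{f^{-1}},1)$ where $L_f,L_{f^{-1}}$ are the Lipschitz constants of $f$ and $f^{-1}$ on that interval. Your proposal is in fact more explicit than the paper's, which simply asserts that $L_{f^{-1}}$ exists without isolating the underlying requirement $\min_{|z|\le B} f'(z)>0$; you are right that this positivity is not forced by \cref{assume:link_function} alone (your $z^3$ counterexample is apt) and must be read off the concrete inverse link functions listed in the remark following the assumption.
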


This lemma is important for relating the bounds between the logit space and the output/label space.
Let $z(s)=\vx(s)^\top\vw$, or $z=\vx^\top\vw$ for conciseness in the following.
The next assumption is necessary later to ensure that the step size is positive.

\begin{assumption}[Bounded discount]
\label{assume:bound_discount}
The discount factor satisfies $\gamma<\frac{1}{L^2}$ for the $L$ in \cref{lemma:lipschitz}.
\end{assumption}

Given these assumptions, \cref{sec:expected_contraction} shows that the projected expected update admits a unique fixed point $\vw^*\in\Wcal$, \cref{sec:expected_convergence} proves finite-time convergence bounds for the projected expected update, and finally, \cref{sec:sample_convergence} proves the convergence bound of sample-based update.

\subsection{Contraction of Expected Update}
\label{sec:expected_contraction}

The expected update followed by a projection onto $\Wcal$ is given by
\begin{align}
\vw_{t+1} &= 
\Pcal(\vw_t + \alpha \overline{g}(\vw_t))
\label{eq:expected_update}
\end{align}
where $\overline{g}(\vw_t)\defeq \EE[(y_{td}-y)\vx]$ is the expected update,
$\Pcal(\vw) \defeq \argmin_{\vu\in\Wcal}\|\vw-\vu\|_2^2$ is the projection,
$y\defeq f(z),y_{td}\defeq f(r+\gamma \vx'^\top\vw_t)$ and $\vx'\defeq \vx(s')$. 
This updating rule -- the mapping from $\vw_t$ to $\vw_{t+1}$ -- can be viewed as a type of \emph{generalized Bellman operator}. 
The term ``generalized" refers to its ability to incorporate different types of $f$ functions, enabling it to accommodate various learning tasks, such as regression and classification. 

We show the existence of a fixed point $\vw^*\in\Wcal$ for our update using the Banach fixed-point theorem. 
Given two iterates $\vw_t^A,\vw_t^B\in\Wcal$ and their updates
\begin{align}
\vw_{t+1}^A
=\Pcal(\vw^A_t+\alpha \overline{g}(\vw^A_t))
\qquad
\vw_{t+1}^B
=\Pcal(\vw^B_t+\alpha \overline{g}(\vw^B_t))
\end{align}
we will show that any two parameters $\vw^A,\vw^B\in\Wcal$ will get closer after each iteration.

Note that
\begin{align}
\|\vw^A_{t+1}-\vw^B_{t+1}\|_2^2
&\le
\|\vw^A_{t}+\alpha \overline{g}(\vw^A_t)-(\vw^B_{t}+\alpha \overline{g}(\vw^B_t))\|_2^2
\end{align}
since projection $\mathcal{P}$ onto a convex set is contracting.
Furthermore, we can expand it as
\begin{align}
\|\vw^A_{t+1}-\vw^B_{t+1}\|_2^2
&\le
\|\vw^A_t-\vw_{t}^B\|_2^2
-2\alpha(\vw_t^A - \vw_{t}^B)^\top (\overline{g}(\vw_t^B)-\overline{g}(\vw_t^A))
\nonumber\\
&\qquad\qquad 
+\alpha^2\|\overline{g}(\vw_t^A)-\overline{g}(\vw_t^B)\|_2^2
% \label{eq:update_norm_bound}
\end{align}
The common strategy is to ensure that the second term can outweigh the third term so that the distance decreases after each iteration. 
To simplify notation, denote their predictions $z^A=\vx^\top \vw^A,z^B=\vx^\top \vw^B$, transformed predictions $y^A=f(z^A),y^B=f(z^B)$ and bootstrap targets $y_{td}^A=f(r+\vx'^\top\vw^A),y_{td}^B=f(r+\vx'^\top\vw^B)$.
The following two lemmas bound the two terms respectively.

\begin{restatable}{lemma}{contractioncrossprod}
\label{lemma:cross_prod}
For $\vw^A,\vw^B\in\Wcal$,
$(\vw^A-\vw^B)^\top (\overline{g}(\vw^B)-\overline{g}(\vw^A))
\ge \left(\frac{1}{L}-\gamma L\right)\cdot 
\EE\left[(z^A-z^B)^2\right]$.
\end{restatable}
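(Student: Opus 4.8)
The plan is to expand the bilinear form into a single scalar expectation and then split it into a \emph{monotonicity} term and a \emph{bootstrap} term, each of which is controlled by one side of the bi-Lipschitz inequality \eqref{eq:bi_lipschitz} from \cref{lemma:lipschitz}. First I would use linearity of $\overline{g}$ together with the identity $(\vw^A-\vw^B)^\top\vx=z^A-z^B$ to rewrite the left-hand side as
\begin{align}
(\vw^A-\vw^B)^\top(\overline{g}(\vw^B)-\overline{g}(\vw^A))
=\EE[(z^A-z^B)(y^A-y^B)]-\EE[(z^A-z^B)(y_{td}^A-y_{td}^B)],
\end{align}
and then bound the two terms separately.

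For the monotonicity term, since $y^A-y^B=f(z^A)-f(z^B)$ and $f$ is strictly increasing by \cref{assume:link_function}, the factors $z^A-z^B$ and $y^A-y^B$ always share a sign, so their product equals $|z^A-z^B|\,|f(z^A)-f(z^B)|$. Applying the lower bound of \eqref{eq:bi_lipschitz} pointwise inside the expectation then gives $\EE[(z^A-z^B)(y^A-y^B)]\ge\frac{1}{L}\EE[(z^A-z^B)^2]$.

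For the bootstrap term, I would first lower-bound it by $-\EE[|z^A-z^B|\,|y_{td}^A-y_{td}^B|]$. The two arguments of $f$ inside the bootstrap targets are $r+\gamma\vx'^\top\vw^A$ and $r+\gamma\vx'^\top\vw^B$, whose difference is $\gamma\,\vx'^\top(\vw^A-\vw^B)$, so the upper bound of \eqref{eq:bi_lipschitz} yields $|y_{td}^A-y_{td}^B|\le\gamma L\,|\vx'^\top(\vw^A-\vw^B)|$. A Cauchy--Schwarz step then gives $\EE[|z^A-z^B|\,|\vx'^\top(\vw^A-\vw^B)|]\le\sqrt{\EE[(z^A-z^B)^2]}\,\sqrt{\EE[(\vx'^\top(\vw^A-\vw^B))^2]}$. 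The decisive observation is that, with the current state drawn from the stationary distribution $D$, the successor feature $\vx'$ has the same marginal law as $\vx$, so $\EE[(\vx'^\top(\vw^A-\vw^B))^2]=\EE[(z^A-z^B)^2]$; hence the bootstrap term is bounded below by $-\gamma L\,\EE[(z^A-z^B)^2]$. Adding the two bounds produces the advertised constant $\frac{1}{L}-\gamma L$.

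The step I expect to be the main obstacle is justifying the marginal-equality $\EE[(\vx'^\top(\vw^A-\vw^B))^2]=\EE[(z^A-z^B)^2]$: it is the one place where the MRP structure (not merely the link function) is used, relying on stationarity so that the successor state is distributed identically to the current state, and it is exactly what lets the bootstrap term be measured in the same quadratic form $\EE[(z^A-z^B)^2]$ as the monotonicity term. The remaining work is routine: a pointwise sign argument for the first term, and the two directions of the bi-Lipschitz bound together with Cauchy--Schwarz for the second.
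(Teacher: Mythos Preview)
Your proposal is correct and follows essentially the same route as the paper: both rewrite the bilinear form as $\EE[(z^A-z^B)(y^A-y^B)]-\EE[(z^A-z^B)(y^A_{td}-y^B_{td})]$, lower-bound the first term pointwise via monotonicity and the $1/L$ side of \cref{lemma:lipschitz}, and upper-bound the second via the $L$ side. The only cosmetic difference is that the paper phrases the bootstrap bound as ``$\vz\mapsto f(\vr+\gamma\Pmat\vz)$ is $(\gamma L)$-Lipschitz'' in one line, whereas you spell it out with Cauchy--Schwarz plus the stationarity identity $\EE[(\vx'^\top(\vw^A-\vw^B))^2]=\EE[(z^A-z^B)^2]$; your version is more transparent about where the MRP structure enters.
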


\begin{restatable}{lemma}{contractiongradnorm}
\label{lemma:grad_norm}
For $\vw^A,\vw^B\in\Wcal$,
$\|\overline{g}(\vw^A)-\overline{g}(\vw^B)\|_2
\le 2L \sqrt{\EE\left[(z^A-z^B)^2\right]}$
\end{restatable}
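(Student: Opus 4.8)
The plan is to bound the norm of the vector $\overline{g}(\vw^A)-\overline{g}(\vw^B)$ by pushing the Euclidean norm inside the expectation and then controlling the resulting scalar with the \emph{upper} Lipschitz bound from \cref{lemma:lipschitz}. Writing out the two expected updates and collecting terms, the difference takes the form
\[
\overline{g}(\vw^A)-\overline{g}(\vw^B)
=\EE\big[\big((y_{td}^A-y_{td}^B)-(y^A-y^B)\big)\vx\big],
\]
so a single scalar random variable $\Delta\defeq(y_{td}^A-y_{td}^B)-(y^A-y^B)$ multiplies the feature $\vx$. By convexity of the norm (Jensen) together with \cref{assume:feature_regularity} ($\|\vx\|_2\le1$), I get
\[
\|\overline{g}(\vw^A)-\overline{g}(\vw^B)\|_2\le\EE[\,|\Delta|\,\|\vx\|_2\,]\le\EE[\,|\Delta|\,],
\]
which reduces everything to a scalar estimate.

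First I would split $|\Delta|\le|y^A-y^B|+|y_{td}^A-y_{td}^B|$ by the triangle inequality. For the first term, \cref{lemma:lipschitz} gives $|y^A-y^B|=|f(z^A)-f(z^B)|\le L|z^A-z^B|$. For the bootstrap term, the key observation is that the two arguments $r+\gamma(\vx')^\top\vw^A$ and $r+\gamma(\vx')^\top\vw^B$ differ only by $\gamma(\vx')^\top(\vw^A-\vw^B)$ -- the reward $r$ cancels -- so the same $L$-Lipschitz property of $f$ yields $|y_{td}^A-y_{td}^B|\le L\gamma\,|(\vx')^\top(\vw^A-\vw^B)|$. Applying $\EE|Z|\le\sqrt{\EE Z^2}$ (Jensen / Cauchy--Schwarz) termwise then gives
\[
\EE[\,|\Delta|\,]\le L\sqrt{\EE[(z^A-z^B)^2]}+L\gamma\sqrt{\EE[((\vx')^\top(\vw^A-\vw^B))^2]}.
\]

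The one substantive point is the next-state term. Here I would invoke stationarity of $\Dmat$: since $s'\sim P(\cdot\mid s)$ with $s\sim D$, the marginal law of $s'$ is again $D$, hence $\EE[((\vx')^\top(\vw^A-\vw^B))^2]=\EE[(z^A-z^B)^2]$, the two second moments being identical copies under the stationary distribution. Substituting this and using $\gamma<1$ (which follows from \cref{assume:bound_discount}, since $\gamma<1/L^2\le1$ as $L\ge1$) collapses the coefficient $L(1+\gamma)$ to at most $2L$, giving the claimed bound. The main obstacle is a bookkeeping one rather than a conceptual one: I must verify that the constant $L$ of \cref{lemma:lipschitz} is a valid Lipschitz constant for $f$ not only on the range of logits $\vx^\top\vw$ but also on the shifted bootstrap arguments $r+\gamma(\vx')^\top\vw$. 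This holds because $\Wcal$ is compact, the features are bounded, and the rewards are bounded (labels lie in a bounded set and $f^{-1}$ is continuous), so all of these arguments lie in one common bounded interval on which $f'$ is bounded by \cref{assume:link_function}; taking $L$ to be the supremum of $|f'|$ over that enlarged interval makes it a uniform Lipschitz constant, after which the remaining steps are routine.
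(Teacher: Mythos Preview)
Your proof is correct and follows the same overall skeleton as the paper (Lipschitz control of $f$, stationarity to identify the next-state second moment with $\EE[(z^A-z^B)^2]$, then $\gamma\le 1$), but the decomposition is slightly different and in fact a bit more elementary. The paper first applies Cauchy--Schwarz to obtain $\|\overline{g}(\vw^A)-\overline{g}(\vw^B)\|_2\le\sqrt{\EE[\Delta^2]}$ and then splits the \emph{second moment} via the inequality $\EE[(\delta-\delta_{td})^2]\le(\sqrt{\EE[\delta^2]}+\sqrt{\EE[\delta_{td}^2]})^2$, whereas you push the norm inside by Jensen to get a \emph{first moment} $\EE[|\Delta|]$, split that pointwise by the ordinary triangle inequality, and only afterwards pass to second moments. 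Your ordering avoids the extra Cauchy--Schwarz step and is marginally cleaner; the paper's route stays in $L^2$ throughout, which can be convenient if one later wants sharper constants (e.g., retaining the factor $1+\gamma$ rather than $2$). You are also right to flag that \cref{lemma:lipschitz} as stated only covers arguments of the form $\vx^\top\vw$ and not the shifted bootstrap inputs $r+\gamma(\vx')^\top\vw$; the paper uses the same Lipschitz bound on those inputs implicitly, so your remark about enlarging the compact interval is the correct justification in both proofs.
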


With these lemmas, we can prove the following contraction theorem.

\begin{restatable}{theorem}{expectcontraction}[Contraction of Expected Update]
\label{thm:expected_contraction}
Under \cref{assume:feature_regularity}-\ref{assume:bound_discount},
by choosing $\alpha=\frac{1-\gamma L^2}{4L^3}>0$, 
% \vspace{-0.4cm}
the projected update (\ref{eq:expected_update}) is a contraction w.r.t.\ $\|\cdot\|_2$ with a unique fixed point $\vw^*\in\Wcal$, and $\vw_t$ converges to it.
\end{restatable}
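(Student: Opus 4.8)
The plan is to establish that the map $T(\vw)\defeq\Pcal(\vw+\alpha\overline{g}(\vw))$ is a contraction on $(\Wcal,\|\cdot\|_2)$ and then invoke the Banach fixed-point theorem. Picking up from the expansion of $\|\vw^A_{t+1}-\vw^B_{t+1}\|_2^2$ already displayed above, I would substitute \cref{lemma:cross_prod} to lower-bound the cross term and \cref{lemma:grad_norm} to upper-bound the squared gradient-difference term. This yields
\begin{align}
\|\vw^A_{t+1}-\vw^B_{t+1}\|_2^2
\le\|\vw^A_t-\vw^B_t\|_2^2
-\kappa\,\EE[(z^A-z^B)^2],
\end{align}
where $\kappa\defeq 2\alpha\left(\frac{1}{L}-\gamma L\right)-4\alpha^2L^2$ collects the coefficients multiplying $\EE[(z^A-z^B)^2]$.

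Next I would convert the $\EE[(z^A-z^B)^2]$ term into a multiple of $\|\vw^A-\vw^B\|_2^2$. Since $z^A-z^B=\vx^\top(\vw^A-\vw^B)$, we have $\EE[(z^A-z^B)^2]=(\vw^A-\vw^B)^\top\Sigmamat(\vw^A-\vw^B)\ge\omega\|\vw^A-\vw^B\|_2^2$, where $\omega>0$ is the smallest eigenvalue of $\Sigmamat$, positive by the full-rank part of \cref{assume:feature_regularity}. Plugging the chosen step size $\alpha=\frac{1-\gamma L^2}{4L^3}$ into $\kappa$ and using $\frac{1}{L}-\gamma L=\frac{1-\gamma L^2}{L}$, a short calculation gives $\kappa=\frac{(1-\gamma L^2)^2}{4L^4}>0$; this $\alpha$ is in fact the maximizer of the concave quadratic $\alpha\mapsto\kappa$, which motivates the specific choice, and positivity of $1-\gamma L^2$ comes from \cref{assume:bound_discount}. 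Hence
\begin{align}
\|\vw^A_{t+1}-\vw^B_{t+1}\|_2^2
\le(1-\kappa\omega)\,\|\vw^A_t-\vw^B_t\|_2^2.
\end{align}

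To certify that $q\defeq 1-\kappa\omega\in[0,1)$, I would bound $\omega\le 1$: because $\|\vx(s)\|_2\le 1$, each rank-one term $\vx(s)\vx(s)^\top$ has spectral norm at most $1$, so $\Sigmamat$, being a $D$-weighted average of such terms, has all eigenvalues in $[0,1]$. Together with $\kappa\le\frac{1}{4}$ (as $1-\gamma L^2\le 1$ and $L\ge 1$), this gives $\kappa\omega\le\frac{1}{4}<1$, so $q\in[0,1)$ and $T$ is a genuine contraction with factor $\sqrt{q}<1$. Since $\Wcal$ is convex and compact, it is complete, so the Banach fixed-point theorem yields a unique fixed point $\vw^*\in\Wcal$ and guarantees $\vw_t\to\vw^*$.

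I expect the main obstacle to be purely bookkeeping: verifying that the specific $\alpha$ makes $\kappa$ strictly positive and that $\kappa\omega$ stays strictly below $1$, which is where the feature-norm bound $\|\vx\|_2\le 1$ (giving $\omega\le1$) and the discount bound $\gamma<1/L^2$ (giving $\kappa>0$) both enter. The two preceding lemmas already absorb the genuinely delicate analysis arising from the inverse link function $f$, so the theorem itself reduces to this quadratic-in-$\alpha$ computation and a routine application of Banach's theorem on the complete set $\Wcal$.
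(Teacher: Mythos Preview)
Your proposal is correct and follows essentially the same route as the paper: expand the squared distance, apply \cref{lemma:cross_prod} and \cref{lemma:grad_norm}, plug in $\alpha=\frac{1-\gamma L^2}{4L^3}$ to obtain the coefficient $\kappa=\left(\frac{1-\gamma L^2}{2L^2}\right)^2$, convert $\EE[(z^A-z^B)^2]$ to a multiple of $\|\vw^A-\vw^B\|_2^2$, and conclude via Banach. Your conversion step---using the \emph{smallest} eigenvalue $\omega$ of $\Sigmamat$ to get the lower bound $\EE[(z^A-z^B)^2]\ge\omega\|\vw^A-\vw^B\|_2^2$---is in fact more careful than the paper's write-up, which at that point invokes Cauchy--Schwarz and \cref{assume:feature_regularity} in a direction that does not literally give the required lower bound.
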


With this well-defined fixed point, we will analyze the finite-time convergence rate next.
% The main theoretical difficulties in getting this result are that (1) It depends on the properties of $f$ and (2) We also need to consider the projection step, which is beyond existing reinforcement learning theoretical analysis. 
% This novel contraction result also opens new doors for analyzing constrained updates in the reinforcement learning literature.

\subsection{Convergence Rate of Expected Update}
\label{sec:expected_convergence}

Now we prove the convergence rate.
The distance from $\vw_{t+1}$ to $\vw^*$ can be expanded as
\begin{align}
\|\vw^* - \vw_{t+1}\|_2^2
=
\|\vw^* - \vw_{t}\|_2^2
- 2\alpha(\vw^* - \vw_{t})^\top \overline{g}(\vw_t)
+\alpha^2 \|\overline{g}(\vw_t)\|_2^2
% \label{eq:update_norm_bound}
\end{align}
Similarly to before, we want to ensure that
the second term can outweigh the third term in the RHS so that $\vw_{t+1}$ can get closer to $\vw^*$ than $\vw_{t}$ in each iteration. 
This is achieved again by applying \cref{lemma:cross_prod} and \cref{lemma:grad_norm}, which leads to the following result:

\begin{restatable}{theorem}{expectconverge}[Convergence Rate of Expected Update]
\label{thm:expected_update_rate}
Under \cref{assume:feature_regularity}-\ref{assume:bound_discount},
consider the sequence $(\vw_0,\vw_1,\cdots)$ satisfying 
\cref{eq:expected_update}.
Let $\overline{\vw}_T\defeq \frac{1}{T}\sum_{t=0}^{T-1}\vw_t$, 
$\overline{z}_T=\vx^\top\overline{\vw}_T$
and $z^*=\vx^\top\vw^*$. 
By choosing $\alpha=\frac{1-\gamma L^2}{4L^3}>0$, we have
\begin{align}
\EE\left[ (z^* - \overline{z}_T)^2 \right]
&\le
\left(\frac{2L^2}{1-\gamma L^2}\right)^2
\frac{\|\vw^*-\vw_{0}\|_2^2}{T}
\label{eq:expected_z_converge}
\\
\|\vw^*-\vw_{T}\|_2^2
&\le 
\exp\left(-T\omega
\left(\frac{1-\gamma L^2}{2L^2}\right)^2 \right)
\|\vw^*-\vw_{0}\|_2^2
\label{eq:exp_converge}
\end{align}
% \vspace{-0.4cm}
\end{restatable}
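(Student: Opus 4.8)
The plan is to follow the standard one-step drift argument for projected stochastic approximation used in \citet{bhandari2018finite}, but to route every quantity through the single scalar $\EE[(z_t-z^*)^2]$, where $z_t=\vx^\top\vw_t$, so that the two technical lemmas already proven (\cref{lemma:cross_prod} and \cref{lemma:grad_norm}) apply directly. The whole argument is a telescoping of a per-iteration contraction inequality; the averaged-iterate bound \cref{eq:expected_z_converge} then follows from convexity, while the last-iterate bound \cref{eq:exp_converge} follows by converting the $z$-space error into parameter-space error through the smallest eigenvalue of $\Sigmamat$.

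First I would expand the distance to the fixed point. Since $\vw^*\in\Wcal$ and projection onto a convex set is nonexpansive, $\|\vw^*-\vw_{t+1}\|_2^2\le\|\vw^*-\vw_t\|_2^2-2\alpha(\vw^*-\vw_t)^\top\overline{g}(\vw_t)+\alpha^2\|\overline{g}(\vw_t)\|_2^2$. I would lower-bound the cross term by \cref{lemma:cross_prod} with $(\vw^A,\vw^B)=(\vw^*,\vw_t)$, obtaining $(\vw^*-\vw_t)^\top(\overline{g}(\vw_t)-\overline{g}(\vw^*))\ge(\tfrac1L-\gamma L)\EE[(z_t-z^*)^2]$, and upper-bound the update norm by \cref{lemma:grad_norm}, obtaining $\|\overline{g}(\vw_t)-\overline{g}(\vw^*)\|_2^2\le4L^2\EE[(z_t-z^*)^2]$. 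The key fact needed is $\overline{g}(\vw^*)=\zerovec$; I would justify this from \cref{thm:expected_contraction} by taking $\Wcal$ large enough that $\vw^*$ lies in its interior, and for the cross term I can avoid interiority entirely since the projection's variational inequality forces $(\vw^*-\vw_t)^\top\overline{g}(\vw^*)\ge0$. Substituting both bounds gives $\|\vw^*-\vw_{t+1}\|_2^2\le\|\vw^*-\vw_t\|_2^2-\alpha\big(2(\tfrac1L-\gamma L)-4L^2\alpha\big)\EE[(z_t-z^*)^2]$, and the stated $\alpha=\tfrac{1-\gamma L^2}{4L^3}$ is exactly the value making the bracket equal $\tfrac1L-\gamma L$, leaving the clean recursion $\|\vw^*-\vw_{t+1}\|_2^2\le\|\vw^*-\vw_t\|_2^2-\tfrac{(1-\gamma L^2)^2}{4L^4}\EE[(z_t-z^*)^2]$. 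Here \cref{assume:bound_discount} ($\gamma<1/L^2$) is precisely what keeps this coefficient strictly positive.

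For \cref{eq:expected_z_converge} I would sum the recursion over $t=0,\dots,T-1$; the distances telescope and the nonnegative $\|\vw^*-\vw_T\|_2^2$ is dropped, yielding $\sum_{t=0}^{T-1}\EE[(z_t-z^*)^2]\le\big(\tfrac{2L^2}{1-\gamma L^2}\big)^2\|\vw^*-\vw_0\|_2^2$. Since $z^*-\overline{z}_T=\tfrac1T\sum_t(z^*-z_t)$, Jensen's inequality $(\tfrac1T\sum a_t)^2\le\tfrac1T\sum a_t^2$ applied inside the expectation delivers the averaged-iterate bound. For the last-iterate bound \cref{eq:exp_converge} I would instead invoke \cref{assume:feature_regularity}: because $\EE[(z_t-z^*)^2]=(\vw_t-\vw^*)^\top\Sigmamat(\vw_t-\vw^*)\ge\omega\|\vw_t-\vw^*\|_2^2$ with $\omega=\lambda_{\min}(\Sigmamat)>0$, the per-step inequality becomes the geometric recursion $\|\vw^*-\vw_{t+1}\|_2^2\le\big(1-\omega(\tfrac{1-\gamma L^2}{2L^2})^2\big)\|\vw^*-\vw_t\|_2^2$; iterating $T$ times and using $1-u\le e^{-u}$ produces the exponential rate.

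The main obstacle I anticipate is not the algebra but pinning down $\overline{g}(\vw^*)=\zerovec$ so that \cref{lemma:grad_norm} can be applied with $\vw^*$ as reference: if $\vw^*$ lay on the boundary of $\Wcal$ with $\overline{g}(\vw^*)\neq\zerovec$, an extra non-telescoping $\|\overline{g}(\vw^*)\|_2$ term would survive in the norm bound and destroy the geometric decay. The cleanest resolution is to require $\Wcal$ to contain the unconstrained fixed point in its interior (the regime of interest), consistent with the role of the projection as a mere safeguard against divergence. A secondary care point is the exact constant bookkeeping when balancing the step size, but that is mechanical once the bracket $2(\tfrac1L-\gamma L)-4L^2\alpha$ is isolated.
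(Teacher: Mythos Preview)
Your proposal is correct and follows essentially the same route as the paper's proof: the one-step expansion of $\|\vw^*-\vw_{t+1}\|_2^2$, application of \cref{lemma:cross_prod} and \cref{lemma:grad_norm} with $(\vw^A,\vw^B)=(\vw^*,\vw_t)$, the step-size choice that leaves the coefficient $\bigl(\tfrac{1-\gamma L^2}{2L^2}\bigr)^2$, telescoping plus Jensen for \cref{eq:expected_z_converge}, and the eigenvalue lower bound $\EE[(z^*-z_t)^2]\ge\omega\|\vw^*-\vw_t\|_2^2$ together with $1-u\le e^{-u}$ for \cref{eq:exp_converge}. On the $\overline{g}(\vw^*)=\zerovec$ issue you flag as the main obstacle, the paper's own proof simply writes the one-step identity directly as $\|\vw^*-\vw_{t}\|_2^2 -2\alpha(\vw^*-\vw_{t})^\top(\overline{g}(\vw_t)-\overline{g}(\vw^*))+\alpha^2\|\overline{g}(\vw_t)-\overline{g}(\vw^*)\|_2^2$ without further justification, so it is implicitly making the same interiority assumption you propose; your treatment is in fact more careful than the paper's on this point.
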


\cref{eq:expected_z_converge} shows that in expectation, the average prediction converges to the true value in the $z$-space, while \cref{eq:exp_converge} shows that the last iterate converges to the fixed point exponentially fast when using expected update.
In practice, a sample-based update is preferred and we discuss its convergence next.

\subsection{Convergence Rate of Sample-based Update} 
\label{sec:sample_convergence}

Here we show the convergence under i.i.d.\ sample setting. 
Suppose $s_t$ is sampled from the stationary distribution $D(s)$ and $s_{t+1}\sim P(\cdot|s_{t})$.
For conciseness, let $\vx_t\defeq \vx(s_{t})$ 
and $\vx_{t+1}\defeq \vx(s_{t+1})$.
The sample-based updating rule is
\begin{align}
\vw_{t+1}= 
\Pcal(\vw_t + \alpha_t g_t(\vw_t))
\quad
\text{ with }
\quad
g_t(\vw_t)
\defeq (y_{t,td}-y_t)\vx_t
\label{eq:sample_update}
\end{align}
where $y_t\defeq f(\vx_t^\top\vw_t),y_{t,td} \defeq f(z_{t,td})=f(r_{t+1}+\gamma\vx_{t+1}^\top\vw_t)$. 
The $t$ in $g_t$ is for the sample $(\vx_t,y_t)$, while the $\vw_t$ in $g_t(\vw_t)$ is for calculating $y_t$ and $y_{t,td}$.

To account for randomness, let $\sigma^2\defeq \EE[\|g_{t}(\vw^*)\|_2^2]$, the variance of the TD update at the stationary point $\vw^*$ under the stationary distribution. 
We need two supporting lemmas to prove the convergence. 
The first shows that there is an optimality condition for $\vw^*$ similar to the first-order condition in constrained convex optimization (\cref{lemma:optimality_condition}), and the second bounds the expected norm of the update (\cref{lemma:grad_norm_sample}). 

For the first lemma, note that the iterative update (\ref{eq:expected_update}) does not apply gradient descent on any fixed objective, so one cannot simply apply the first-order optimality condition from constrained optimization~\citep[Sec.4.2.3]{boyd2004convex}.
Nevertheless, we can still prove the condition due to the projection step:
\begin{restatable}{lemma}{optimalitycondition}
\label{lemma:optimality_condition}
The fixed point $\vw^*\in\Wcal$ in \cref{thm:expected_contraction} satisfies 
\begin{align}
(\vw^*-\vw)^\top \overline{g}(\vw^*)
\ge 0
\quad \forall \vw\in\Wcal.
\label{eq:optimality_condition}
\end{align}
\end{restatable}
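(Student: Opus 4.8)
The plan is to use the only structural fact we have about $\vw^*$ — that it is a fixed point of the projected expected update, so $\vw^*=\Pcal(\vw^*+\alpha\,\overline{g}(\vw^*))$ — together with the variational (obtuse-angle) characterization of Euclidean projection onto a convex set. Because $\Wcal$ is convex and compact, the projection $\Pcal(\vp)=\argmin_{\vu\in\Wcal}\|\vp-\vu\|_2^2$ is single-valued, and its defining inequality states that for any $\vp\in\RR^d$ the projection $\vq\defeq\Pcal(\vp)$ is the unique point of $\Wcal$ satisfying $(\vp-\vq)^\top(\vw-\vq)\le 0$ for all $\vw\in\Wcal$.

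First I would instantiate this inequality at the fixed point by taking $\vp=\vw^*+\alpha\,\overline{g}(\vw^*)$, so that $\vq=\Pcal(\vp)=\vw^*$ by the fixed-point property. Substituting $\vp-\vq=\alpha\,\overline{g}(\vw^*)$ gives $\alpha\,\overline{g}(\vw^*)^\top(\vw-\vw^*)\le 0$ for every $\vw\in\Wcal$. Since $\alpha=\frac{1-\gamma L^2}{4L^3}>0$ by \cref{assume:bound_discount} and \cref{thm:expected_contraction}, I would divide by $\alpha$ and rearrange to obtain $(\vw^*-\vw)^\top\overline{g}(\vw^*)\ge 0$ for all $\vw\in\Wcal$, which is exactly \cref{eq:optimality_condition}.

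The one point to handle with care — rather than a genuine obstacle — is justifying the projection inequality itself, precisely because, as remarked just before the lemma, the expected update is not the gradient of any fixed objective, so the textbook first-order optimality condition cannot be quoted directly against a known objective. The inequality is nonetheless a pure consequence of the convexity of $\Wcal$: for any $\vw\in\Wcal$ and $\lambda\in(0,1]$ the segment point $\vq+\lambda(\vw-\vq)$ lies in $\Wcal$, and minimality of $\lambda\mapsto\|\vp-\vq-\lambda(\vw-\vq)\|_2^2$ at $\lambda=0$ forces its one-sided derivative there to be nonnegative, which after differentiation yields $(\vp-\vq)^\top(\vw-\vq)\le 0$. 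I would either include this short derivative computation or simply cite the standard convex projection theorem. Notably, none of the contraction or bi-Lipschitz machinery from \cref{lemma:cross_prod,lemma:grad_norm} is needed here; the statement follows entirely from the projection geometry at the fixed point.
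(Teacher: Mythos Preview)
Your proposal is correct and follows essentially the same approach as the paper: both use the fixed-point property $\vw^*=\Pcal(\vw^*+\alpha\overline{g}(\vw^*))$ together with the first-order optimality condition for Euclidean projection onto a convex set, then simplify and divide by $\alpha>0$. The paper cites the optimality condition from \citet[Sec.~4.2.3]{boyd2004convex} rather than deriving it, but the argument is otherwise identical to yours.
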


The next lemma is similar to \cref{lemma:grad_norm} but for the sample-based update:

\begin{restatable}{lemma}{gradnormsample}
\label{lemma:grad_norm_sample}
For $\vw\in\Wcal$, 
$\EE[\|g_t(\vw)\|_2^2]\le 2\sigma^2+8L^2
\EE[(z_t^*-z_t)^2]$ where $\sigma^2=\EE[\|g_{t}(\vw^*)\|_2^2]$.
\end{restatable}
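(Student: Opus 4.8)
The plan is to control $g_t(\vw)$ by comparing it against its value at the fixed point $\vw^*$, in the same way one bounds a stochastic gradient by its value at the optimum in SGD analyses. First I would write $g_t(\vw)=g_t(\vw^*)+\big(g_t(\vw)-g_t(\vw^*)\big)$ and apply the elementary inequality $\|\va+\vb\|_2^2\le 2\|\va\|_2^2+2\|\vb\|_2^2$. Taking expectations and recalling $\sigma^2=\EE[\|g_t(\vw^*)\|_2^2]$, this reduces the claim to showing $\EE[\|g_t(\vw)-g_t(\vw^*)\|_2^2]\le 4L^2\,\EE[(z_t^*-z_t)^2]$.

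Next I would expand the difference. Since $g_t(\vw)=(y_{t,td}-y_t)\vx_t$, the two gradients share the same feature $\vx_t$, so $g_t(\vw)-g_t(\vw^*)=\big[(y_{t,td}-y_{t,td}^*)-(y_t-y_t^*)\big]\vx_t$, where $y_t^*=f(\vx_t^\top\vw^*)$ and $y_{t,td}^*=f(r_{t+1}+\gamma\vx_{t+1}^\top\vw^*)$. Using $\|\vx_t\|_2\le 1$ from \cref{assume:feature_regularity} and the triangle inequality gives $\|g_t(\vw)-g_t(\vw^*)\|_2\le |y_t-y_t^*|+|y_{t,td}-y_{t,td}^*|$. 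The $L$-Lipschitz upper bound of \cref{lemma:lipschitz} then converts each label-space gap into a logit-space gap: $|y_t-y_t^*|\le L|z_t-z_t^*|$, while for the bootstrap term the shared reward $r_{t+1}$ cancels and the discount is pulled out, yielding $|y_{t,td}-y_{t,td}^*|\le L\gamma|z_{t+1}-z_{t+1}^*|$ with $z_{t+1}=\vx_{t+1}^\top\vw$ and $z_{t+1}^*=\vx_{t+1}^\top\vw^*$. Squaring and using $(a+b)^2\le 2a^2+2b^2$ gives the pointwise bound $\|g_t(\vw)-g_t(\vw^*)\|_2^2\le 2L^2(z_t-z_t^*)^2+2L^2\gamma^2(z_{t+1}-z_{t+1}^*)^2$.

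The crucial step is to handle the next-state term after taking expectations. Because $s_t$ is drawn from the stationary distribution $D$ and $s_{t+1}\sim P(\cdot\mid s_t)$, stationarity ($D^\top\Pmat=D^\top$) implies that $s_{t+1}$ is also distributed according to $D$; since $z-z^*=\vx(\cdot)^\top(\vw-\vw^*)$ is a fixed function of the state, this yields $\EE[(z_{t+1}-z_{t+1}^*)^2]=\EE[(z_t-z_t^*)^2]$. Substituting and using $\gamma<1$ (a consequence of \cref{assume:bound_discount} since $L\ge 1$) bounds $2L^2(1+\gamma^2)$ by $4L^2$, giving $\EE[\|g_t(\vw)-g_t(\vw^*)\|_2^2]\le 4L^2\EE[(z_t^*-z_t)^2]$ and hence the lemma.

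I expect the main subtlety to be the stationarity identity $\EE[(z_{t+1}-z_{t+1}^*)^2]=\EE[(z_t-z_t^*)^2]$, which is what lets the bootstrapped next-state logit be folded back into the current-state quantity on the right-hand side; this is precisely where the sampling-from-the-stationary-distribution assumption is genuinely used. A secondary technical point is ensuring that the single constant $L$ of \cref{lemma:lipschitz} also governs the bootstrap logits $r_{t+1}+\gamma\vx_{t+1}^\top\vw$, which requires noting that on a finite state space the rewards are bounded and $\Wcal$ is compact, so all relevant logits lie in one bounded interval on which $f'$ is bounded.
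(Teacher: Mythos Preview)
Your proposal is correct and follows essentially the same route as the paper: split $g_t(\vw)$ against $g_t(\vw^*)$ via $(a+b)^2\le 2a^2+2b^2$, strip off $\vx_t$ using $\|\vx_t\|_2\le 1$, separate the prediction and bootstrap label gaps, apply the Lipschitz bound of \cref{lemma:lipschitz}, and invoke stationarity so that $\EE[(z_{t+1}-z_{t+1}^*)^2]=\EE[(z_t-z_t^*)^2]$, finishing with $1+\gamma^2\le 2$. The only cosmetic difference is that the paper squares the scalar difference first and then splits, whereas you take the triangle inequality on the scalar before squaring; both yield the same intermediate constant $4L^2(1+\gamma^2)$ and hence the stated $8L^2$ bound. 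Your remark about the same $L$ covering the bootstrap logits is a valid caveat that the paper uses implicitly.
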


Now we are ready to present the convergence when using i.i.d.\ sample for the update:

\begin{restatable}{theorem}{sampleconverge}[Convergence Rate of Sampled-based Update]
\label{thm:sample_converge_rate}
Under \cref{assume:feature_regularity}-\ref{assume:bound_discount},
with sample-based update \cref{eq:sample_update}, 
let 
$\sigma^2=\EE[\|g_{t}(\vw^*)\|_2^2]$,
$\overline{\vw}_T\defeq \frac{1}{T}\sum_{t=0}^{T-1}\vw_t$, 
$\overline{z}_T=\vx^\top\overline{\vw}_T$
and $z^*=\vx^\top\vw^*$.
For $T\ge \frac{64 L^6}{(1-\gamma L^2)^2}$ and a constant step size $\alpha_t=1/\sqrt{T},\forall t$, we have
\begin{align}
\EE\left[ (z^* - \overline{z}_T)^2 \right]
\le 
\frac{L\left(
\|\vw^*-\vw_0\|_2^2 + 2\sigma^2
\right)}
{\sqrt{T}(1-\gamma L^2)}.
\end{align}
\end{restatable}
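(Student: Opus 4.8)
The plan is to follow the projected-SGD template of \citet{bhandari2018finite}, tracking the squared distance $\|\vw^*-\vw_t\|_2^2$ to the fixed point guaranteed by \cref{thm:expected_contraction} and assembling a one-step recursion. First I would use that the projection $\Pcal$ onto the convex set $\Wcal$ is nonexpansive and that $\vw^*\in\Wcal$ to drop the projection:
\[
\|\vw^*-\vw_{t+1}\|_2^2 \le \|\vw^*-\vw_t - \alpha_t g_t(\vw_t)\|_2^2.
\]
Expanding the square and taking the conditional expectation given $\vw_t$ --- where the i.i.d.\ sampling of $(s_t,s_{t+1})$ makes the current transition independent of $\vw_t$, so that $\EE[g_t(\vw_t)\mid\vw_t]=\overline{g}(\vw_t)$ --- yields
\[
\EE[\|\vw^*-\vw_{t+1}\|_2^2\mid\vw_t] \le \|\vw^*-\vw_t\|_2^2 - 2\alpha_t(\vw^*-\vw_t)^\top\overline{g}(\vw_t) + \alpha_t^2\,\EE[\|g_t(\vw_t)\|_2^2\mid\vw_t].
\]

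The crux is to bound the cross term from below and the gradient-norm term from above using the lemmas already established. For the cross term I would apply \cref{lemma:cross_prod} with $\vw^A=\vw^*$ and $\vw^B=\vw_t$ to get $(\vw^*-\vw_t)^\top(\overline{g}(\vw_t)-\overline{g}(\vw^*)) \ge (\tfrac1L-\gamma L)\,\EE[(z^*-z_t)^2]$, then invoke the optimality condition \cref{lemma:optimality_condition} with $\vw=\vw_t$, which gives $(\vw^*-\vw_t)^\top\overline{g}(\vw^*)\ge 0$ and lets me discard that piece, leaving
\[
(\vw^*-\vw_t)^\top\overline{g}(\vw_t) \ge \frac{1-\gamma L^2}{L}\,\EE[(z^*-z_t)^2],
\]
a positive quantity by \cref{assume:bound_discount}. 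For the last term I would use \cref{lemma:grad_norm_sample}, namely $\EE[\|g_t(\vw_t)\|_2^2]\le 2\sigma^2+8L^2\,\EE[(z^*-z_t)^2]$. Taking full expectation via the tower property and collecting terms produces a per-step inequality
\[
\left(2\alpha_t\frac{1-\gamma L^2}{L}-8\alpha_t^2 L^2\right)\EE[(z^*-z_t)^2] \le d_t - d_{t+1} + 2\alpha_t^2\sigma^2,
\]
where $d_t\defeq\EE[\|\vw^*-\vw_t\|_2^2]$.

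To finish I would set $\alpha_t=1/\sqrt{T}$ and observe that the threshold $T\ge 64L^6/(1-\gamma L^2)^2$ is precisely what forces $8\alpha_t L^2\le (1-\gamma L^2)/L$, so the bracketed coefficient is bounded below by $\alpha_t(1-\gamma L^2)/L>0$. Summing over $t=0,\dots,T-1$ telescopes $d_t-d_{t+1}$ to at most $d_0=\|\vw^*-\vw_0\|_2^2$, and since $T\alpha_t^2=1$ the noise contribution is exactly $2\sigma^2$. Dividing by $T$ and by the coefficient gives $\frac1T\sum_t\EE[(z^*-z_t)^2]\le L(\|\vw^*-\vw_0\|_2^2+2\sigma^2)/(\sqrt{T}(1-\gamma L^2))$. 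A final application of Jensen's inequality, using $\overline{z}_T=\frac1T\sum_t z_t$, transfers this averaged bound to $\EE[(z^*-\overline{z}_T)^2]$, which is the stated result.

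I expect the main difficulty to be bookkeeping rather than conceptual: correctly using the i.i.d.\ sampling to replace $g_t$ by $\overline{g}$ inside the conditional expectation, and calibrating the step-size threshold so the descent coefficient stays positive --- the constant $64L^6/(1-\gamma L^2)^2$ traces directly to the requirement $8\alpha L^2\le(1-\gamma L^2)/L$. The one genuinely structural step, inherited from the generalized-Bellman-operator viewpoint rather than ordinary convex SGD, is the use of the optimality condition \cref{lemma:optimality_condition} to eliminate the $\overline{g}(\vw^*)$ term, since the update is not the gradient of any fixed objective.
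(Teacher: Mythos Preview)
Your proposal is correct and follows essentially the same route as the paper: drop the projection via nonexpansiveness, expand the square, use \cref{lemma:optimality_condition} to discard the $\overline{g}(\vw^*)$ contribution, apply \cref{lemma:cross_prod} for the descent term and \cref{lemma:grad_norm_sample} for the noise term, then telescope and finish with Jensen. The only cosmetic difference is that the paper splits the sample-based cross term $\EE[(\vw^*-\vw_t)^\top g_t(\vw_t)]$ by adding and subtracting $g_t(\vw^*)$ before conditioning, whereas you condition first and work with $\overline{g}$ throughout; the step-size calibration $\alpha_t=1/\sqrt{T}\le(1-\gamma L^2)/(8L^3)$ and its consequences are identical.
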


This shows that the generalized TD update converges even when using the sample-based update.
Not surprisingly, it is slower than using the expected update (\ref{eq:expected_z_converge}).

\section{Empirical Studies}
\label{sec:experiment}

This section has two primary objectives: first, to validate whether the theoretical results of variance analysis are applicable to real-world datasets in both linear and neural network (NN) settings; second, to investigate the practical utility of our algorithm within a standard learning setting. To accomplish these goals, we have structured the following content into four subsections, respectively:
1) In the linear setting, we examine synthetic noise added to a real-world dataset.
2) In a NN setting, we explore the impact of synthetic noise on a real-world dataset.
3) In a NN setting, we assess our algorithm's performance using a dataset where the training targets may intuitively be correlated.
4) We evaluate our algorithm on commonly used real-world datasets from regression to image classification.

Our conclusions are as follows: 1) TD learning performs in line with our theoretical expectations in either linear or neural network settings with correlated noise; 2) in a standard SL setting, TD demonstrates performance on par with traditional SL methods. Results on additional data and any missing details are in~\cref{sec:app-reproduce-real}.

\textbf{Setup overview.} 
For regression with real targets, we adopt the popular dataset execution time~\citep{exectime}. 
For image classification, we use the popular MNIST~\citep{mnist}, Fashion-MNIST~\citep{mnistfashion}, Cifar10 and Cifar100~\citep{cifar} datasets. Additional results on other popular datasets (e.g. house price, bike rental, weather, insurance) are in ~\cref{sec:app-reproduce-real}.
In TD algorithms, unless otherwise specified, we use a fixed transition probability matrix with all entries set to $1/n$, which is memory and computation efficient, and simplifies sampling processes.  

\textbf{Baselines and naming rules.} 
TDReg: our TD approach, with its direct competitor being Reg (conventional $l_2$ regression). 
Reg-WP: Utilizes the same probability transition matrix as TDReg but does not employ bootstrap targets.
This baseline can be used to assess the effect of bootstrap and transition probability matrix. 

\subsection{Linear Setting with Correlated Noise} 

As outlined in~\cref{sec:variance_analysis}, we anticipate that in scenarios where the target noise exhibits positive correlation, TD learning should leverage the advantages of using a transition matrix. When this matrix transitions between points with correlated noise, the TD target counterbalances the noise, thereby reducing variance. 
As depicted in~\cref{fig:house-rho-P}, we observe that with an increasing correlation coefficient (indicative of progressively positively correlated noise), TD consistently shows improved generalization performance towards the underlying best baseline. 

In these figures, alongside TD and OLS, we include two baselines known for their efficacy in handling correlated noise: generalized least squares (GLS) and feasible GLS (FGLS)~\citep{aitken1936ls}. It is noteworthy that GLS operates under the assumption that the covariance of noise generation is fully known, aiming to approximate the underlying optimal estimator $\wvec^*$. FGLS has two procedures: estimate the covariance matrix followed by applying this estimation to resolve the linear system. Please refer to \cref{sec:app-additional-realworldlinear} for details.

\begin{figure*}
  \centering
  \includegraphics[width=\textwidth]{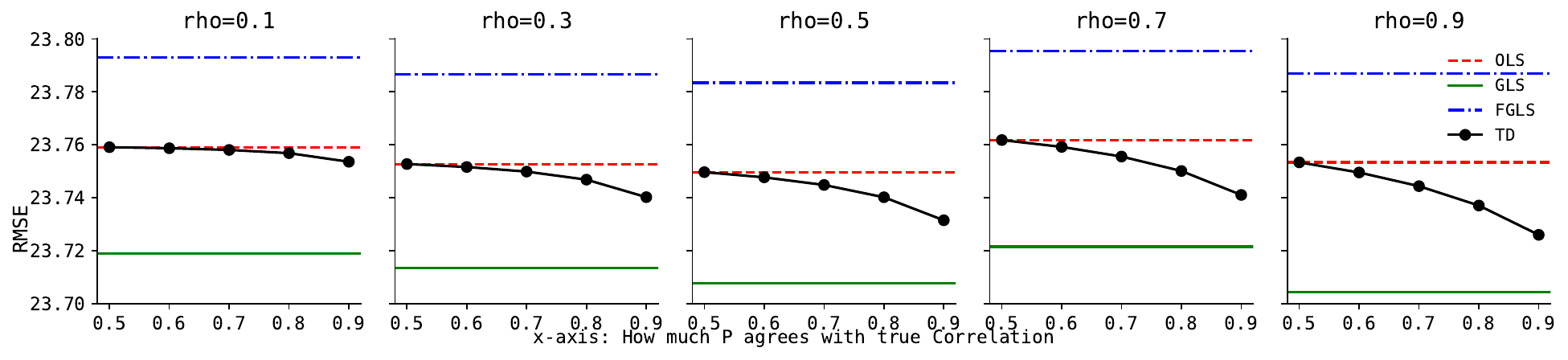}
  % \vspace{-0.6cm}
  \caption{Test root mean squared error (RMSE) versus different \( Ps \) on exec time dataset. From left to right, the noise correlation coefficient increases (\( \rho \in \{0.1, 0.3, 0.5, 0.7, 0.9\} \)). Each plot's x-axis represents the degree of alignment between the transition matrix \( P \) and the true covariance matrix that generates the noise. As \( P \) aligns better – implying a higher likelihood of data points with positively correlated noise transitioning from one to another – the solution of TD increasingly approximates the $\wvec^*$. Furthermore, as the correlation among the data intensifies, one can see larger gap between TD and OLS/FGLS.}
  \label{fig:house-rho-P}
  % \vspace{-0.6cm}
\end{figure*}

\subsection{Deep learning: Regression with Synthetic Correlated Noise}

Similar to the previous experiment, we introduced noise to the original training target using a GP and opted for a uniform transition matrix. The results, depicted in~\cref{fig:extimelc}, reveal two key observations: 1) As the noise level increases, the performance advantage of TD over the baselines becomes more pronounced; 2) The baseline Reg-WP performs just as poorly as conventional Reg, underscoring the pivotal role of the TD target in achieving performance gains. Additionally, it was observed that all algorithms select the same optimal learning rate even when smaller learning rates are available, and TD consistently chooses a large $\gamma=0.95$. This implies that TD's stable performance is attributed NOT to the choice of a smaller learning rate, but rather to the bootstrap target. Additional results are in~\cref{fig:houselc}.

\begin{figure}
  \centering
  %\vspace{-0.3cm}
 \includegraphics[width=0.8\textwidth]{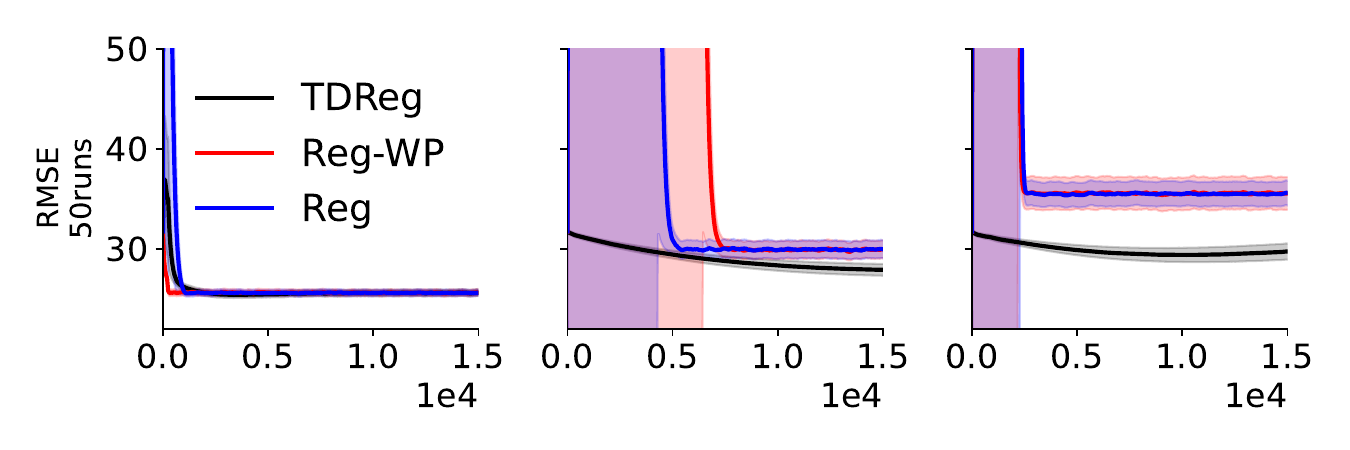}
  % \vspace{-0.6cm}
  \caption{%\small 
  Learning curves on execute time dataset. From left to right, the noise defined as $c\times \epsilon$ where $\epsilon$ is noise generated by GP process and $c\in \{10, 20, 30\}$. Due to the addition of large-scale noise, Reg-WP and Reg exhibit high instability during the early learning stages and ultimately converge to a suboptimal solution compared to our TD (in black line) approach.
  }
  \label{fig:extimelc}
  % \vspace{-0.6cm}
\end{figure}

\subsection{Deep Learning without Synthetic Noise}

In order to test the practical utility of our algorithm, we evaluate it using various transition matrices on an air quality dataset~\citep{vito2016airquality}, without introducing any synthetic noise to alter the original data. This dataset's task is to predict CO concentration. The main objective here is to demonstrate that real-world problems might also feature positively correlated data, where our method could outperform SGD.

Although it is impossible to determine the underlying ground truth of the noise structure precisely, we propose the following intuition for potential positive correlations among data points in this dataset.

If air quality measurements are taken at regular intervals (e.g., hourly or daily) at the same location, temporal dependencies might arise between consecutive measurements due to factors such as diurnal variations, weather patterns, or pollution sources. Consequently, the CO concentration measured at one time point may be positively correlated with that measured at adjacent times. Additionally, spatial dependencies could occur if measurements are taken at nearby locations, particularly in densely populated urban areas with unevenly distributed pollution sources. In such cases, neighboring locations may experience similar air quality conditions, resulting in a positive correlation between their CO concentrations.

\begin{figure}
  \centering
  \includegraphics[width=0.8\textwidth]{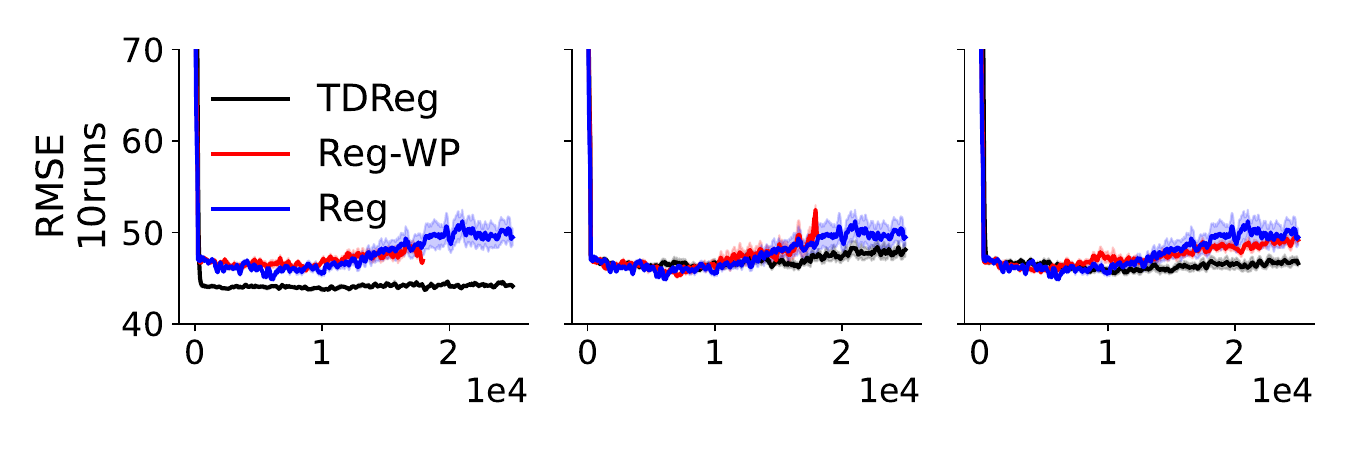}
  % \vspace{-0.6cm}
  \caption{We show the test RMSE with error vs. the number of training steps when using three types of transition matrices. From left to right, the first type of transition assigns a high probability of transitioning to similar data points, the second assigns a high probability of transitioning to distant points, and the third one uses an uniform constant transition matrix as described before. The results are averaged over $10$ random seeds.}
  \label{fig:airquality}
  % \vspace{-0.6cm}
\end{figure}

Figure~\ref{fig:airquality} shows the learning curves of TD and baselines using different types of transition matrix $P$. We believe the experiments shed significant light on our approach, based on the following observations: 

1. When dealing with potentially correlated data, the conventional SGD algorithm (Reg) may diverge or settle on a suboptimal solution, as it theoretically requires i.i.d.\ data to converge. In our experiments, SGD did not perform well. 

2. Our algorithm (TDReg) demonstrates optimal performance when using a transition matrix $P$ that favors transitions between closer points. It performs less well when this preference is reversed and shows only a modest improvement over SGD with a uniform matrix. This is possibly because the entire dataset is somewhat positively correlated, and the use of a bootstrap target helps mitigate the effect of noise. 

3. The comparison with baseline Reg-WP, which differs from Reg only in its use of the same transition matrix as TDReg, aims to isolate the effect of the bootstrap target. If Reg-WP performed as well as TDReg, it would suggest that the benefits of TDReg stem solely from the transition matrix rather than the bootstrap mechanism. However, the experiments show that without the bootstrap, Reg-WP may diverge (as indicated by the disappearance of the red line in the figures), likely because the transition matrix $P$ further increases data point correlation, causing the conventional algorithm to fail. 

As an additional note, to exclude the possibility of divergence due to excessively high learning rates in the baseline algorithms, we tested with sufficiently small learning rates. Despite this, the baselines did not perform better, indicating that the divergence was not due to the selection of a larger learning rate of TD.

\subsection{Deep Learning: Standard Problems}
\label{sec:experiments-standard}

This section aims at investigating how our TD algorithm works on commonly used datasets where there may not be correlation among targets. We found that there is no clear gain/lose with TD's bootstrap target in either regression~(\cref{tab:deep_reg}) or image classification tasks~(\cref{tab:image}). For image datasets, we employed a convolutional neural network (CNN) architecture to demonstrate TD's practical utility. 
% and ResNN$18$ \citep{kaiming2016resnn} 
\begin{table}[t]
\caption{Test root mean squared error (RMSE) with standard error. The results have been smoothed using a 5-point window before averaging over 5 runs.} 
\label{tab:deep_reg}
\setlength{\tabcolsep}{5pt}
\centering
\begin{tabular}{c|c|c|c}
\hline
\diagbox{Data}{Alg.}
& \textbf{TD-Reg} & \textbf{Reg-WP} & \textbf{Reg} \\
\hline
house & 3.384 {\tiny $\pm$ 0.21} & 3.355 {\tiny $\pm$ 0.23} & 3.319 {\tiny $\pm$ 0.16} \\
\hline
exectime & 23.763 {\tiny $\pm$ 0.38} & 23.794 {\tiny $\pm$ 0.36} & 23.87 {\tiny $\pm$ 0.36} \\
\hline
bikeshare & 40.656 {\tiny $\pm$ 0.77} & 40.904 {\tiny $\pm$ 0.36} & 40.497 {\tiny $\pm$ 0.45} \\
\hline
\end{tabular}
\end{table}

\begin{table}[htbp]
\caption{%\small 
Image classification test accuracy. The results are smoothed over $10$ evaluations before averaging over $3$ random seeds. The standard errors are small with no definitive advantage observed for either algorithm.}
\label{tab:image}
\centering
\begin{tabular}{c|c|c}
\hline
\diagbox{Dataset}{Algs} & \textbf{TD-Classify} & \textbf{Classify} \\
\hline
\textbf{mnist} & $99.06\%$ & $99.00\%$  \\
\hline
\textbf{mnistfashion} & $89.10\%$ & $88.96\%$ \\
\hline
\textbf{cifar10} & $67.42\%$ & $67.13\%$ \\
\hline
\textbf{cifar100} & $31.55\%$ & $31.21\%$ \\
\hline
\end{tabular}
\end{table}

We then further study the hyperparameter sensitivity in ~\cref{fig:sensi}. With NNs, TD can pose challenges due to the interplay of two additional hyperparameters: $\gamma$ and the target NN moving rate $\tau$~\citep{mnih2015humanlevel}. Optimizing these hyperparameters can often be computationally expensive. Therefore, we investigate their impact on performance by varying $\gamma, \tau$. We find that selecting appropriate parameters tends to be relatively straightforward. \cref{fig:sensi} displays the testing performance across various parameter settings. It is evident that performance does not vary significantly across settings, indicating that only one hyperparameter or a very small range of hyperparameters needs to be considered during the hyperparameter sweep. It should be noted that~\cref{fig:sensi} also illustrates the sensitivity analysis when employing three intuitive types of transition matrices that do not require prior knowledge. It appears that there is no clear gain/lose with these choices in a standard task. We refer readers to Appendix~\ref{sec:app-designp} for details. 

\begin{figure}
  \centering
  \subfigure[Bikeshare]{
    \includegraphics[width=\figwidththree]{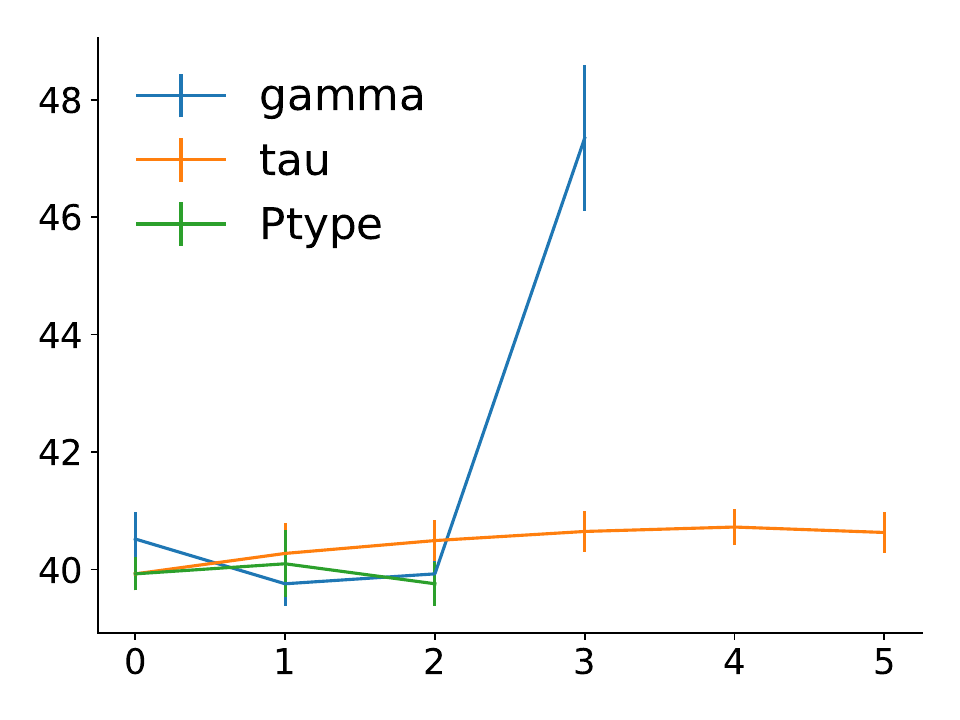}
    \label{fig:bikeshare}
  }
  \subfigure[ExecutionTime]{
    \includegraphics[width=\figwidththree]{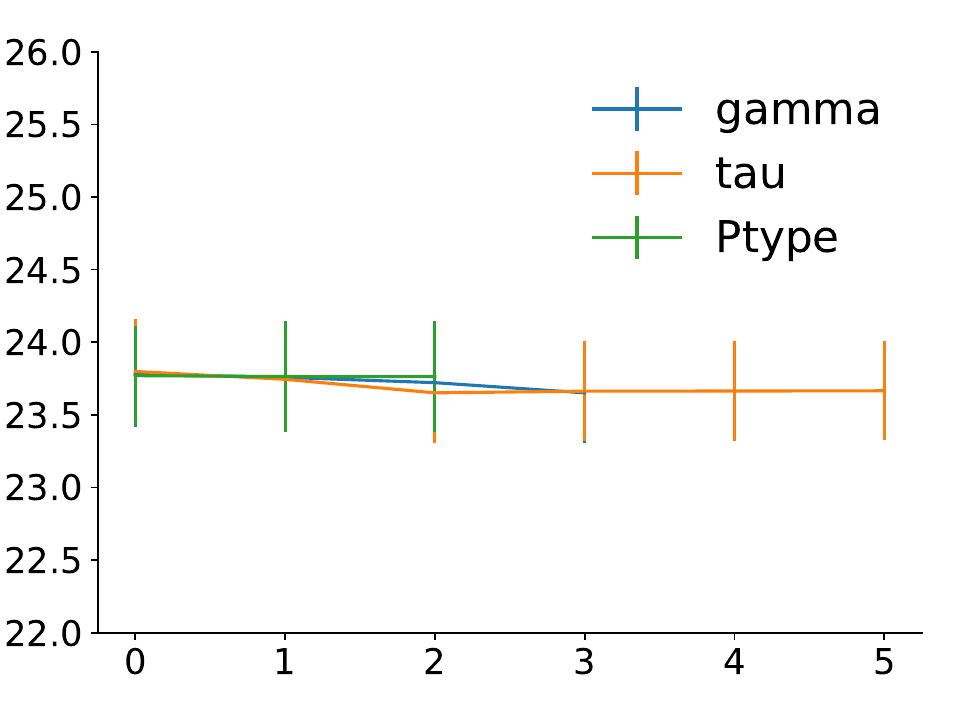}
    \label{fig:execsensi}
  }
  \subfigure[House]{
    \includegraphics[width=\figwidththree]{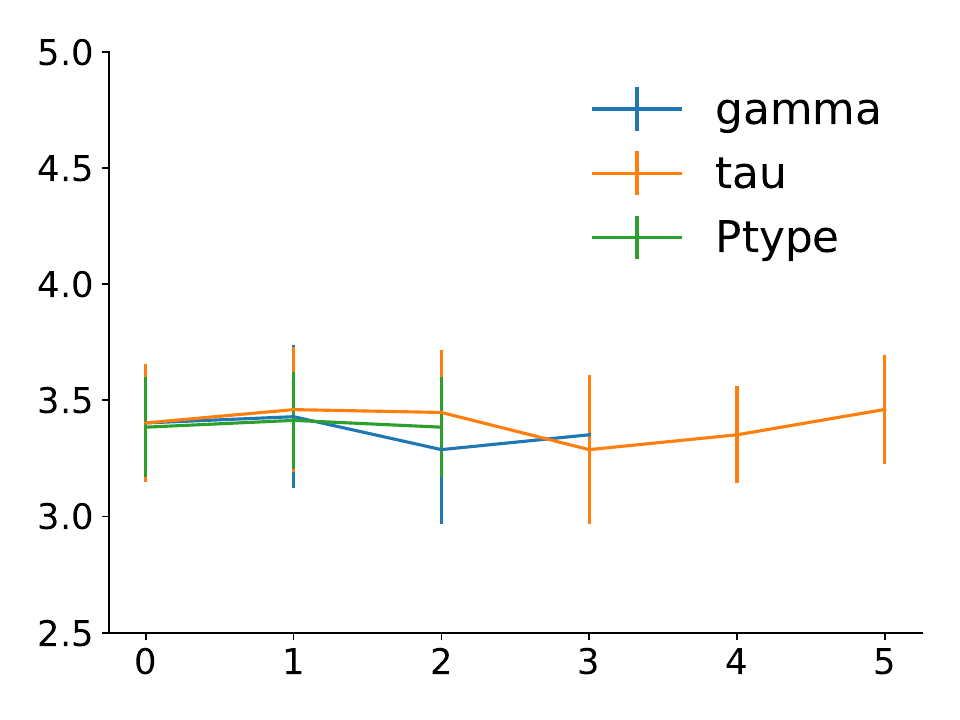}
    \label{fig:housesensi}
  }
  \caption{Sensitivity to hyperparameter settings. We show test RMSE with error bars while sweeping over discount rate $\gamma \in \{0.1, 0.2, 0.4, 0.9\}$, target NN moving rate $\tau \in \{0.001, 0.01, 0.1, 0.2, 0.4, 0.9\}$, and three types of transition probability. $\gamma$ hurts when it goes to the largest value on bikeshare, and it is likely because the exponential term in Poisson regression needs a unusually small learning rate. 
  When generating curves for $\tau$ and $\gamma$, we maintain $P_{\text{type}}$ as a simple uniform constant.
  }
  \label{fig:sensi}
  % \vspace{-0.6cm}
\end{figure}

\section{Conclusion}
\label{sec:conclude}
This paper introduces a universal framework that transforms traditional SL problems into RL problems, alongside proposing a generalized TD algorithm. We highlight a specific set of problems for which such algorithms are apt and the potential variance reduction benefits of TD. 
Additionally, we theoretically formalize the iteration of our generalized TD learning algorithm as a special case of a generalized Bellman operator that is contractive and admits a unique fixed point. Based on this, we further establish the convergence properties of the algorithm.
Empirical evidence, encompassing both linear and deep learning contexts, is provided to validate our theoretical findings and to evaluate the algorithm's design and practical applicability. This research represents a foundational step towards bridging classic SL and RL paradigms. Our MRP formulation essentially offers a perspective that links various data points, corresponding to an interconnected worldview.

\textbf{Future work and limitations.} 
Our current analysis of covariance does not account for the potential advantages in scenarios where the matrix $\Smat$ is non-symmetric. This gap is due to both the absence of relevant literature and the lack of interpretable closed-form expressions, as the non-symmetric matrix can no longer be interpreted as a covariance matrix. It is plausible that specific types of transition matrix designs could be advantageous in certain applications -- particularly when they align with the underlying data assumptions. Additionally, our discussion does not encompass the potential benefits of incorporating more recent TD algorithms such as emphatic TD \citep{sutton2016etd}, gradient TD~\citep{maei2011gradient}, or quasi second-order accelerated TD~\citep{hengshuai2008td,pan2017atd,pan2017sketchatd}, which have been shown to enhance stability or convergence. Furthermore, it would be more intriguing to explore the utility of the probability transition matrix in a broader context, such as transfer learning, domain adaptation, or continual learning settings. Lastly, extending our approach to test its effectiveness with more modern NNs, such as transformers, could be of significant interest to a broader community.

% Acknowledgements should only appear in the accepted version.
%\textbf{Do not} include acknowledgements in the initial version of
%the paper submitted for blind review.
\begin{acks}
Yangchen Pan would like to acknowledge the support from Tuing World leading fellow. 
Junfeng Wen would like to acknowledge the support from NSERC. 
Philip Torr is supported by the UKRI grant: Turing AI Fellowship. 
Philip Torr would also like to thank the Royal Academy of Engineering and FiveAI.
\end{acks}

\vskip 0.2in
%%
%% The next line prints the references.
\printbibliography
%\bibliographystyle{ACM-Reference-Format}
%\bibliography{ref}
% \bibliographystyle{theapa}
%\bibliographystyle{apalike}

%%%%%%%%%%%%%%%%%%%%%%%%%%%%%%%%%%%%%%%%%%%%%%%%%%%%%%%%%%%%%%%%%%%%%%%%%%%%%%%
%%%%%%%%%%%%%%%%%%%%%%%%%%%%%%%%%%%%%%%%%%%%%%%%%%%%%%%%%%%%%%%%%%%%%%%%%%%%%%%
% APPENDIX
%%%%%%%%%%%%%%%%%%%%%%%%%%%%%%%%%%%%%%%%%%%%%%%%%%%%%%%%%%%%%%%%%%%%%%%%%%%%%%%
%%%%%%%%%%%%%%%%%%%%%%%%%%%%%%%%%%%%%%%%%%%%%%%%%%%%%%%%%%%%%%%%%%%%%%%%%%%%%%%

%\newpage
\appendix
The appendix is organized as follows. 
\begin{enumerate}
    \item \cref{sec:app-proofs} provides detail proofs for the theorems.
    \item \cref{sec:app-reproduce} provides implementation details on synthetic datasets from~\cref{sec:min_norm_equivalence}.
    \item \cref{sec:app-reproduce-real} provides implementation details and additional results for those real-world datasets:
    \begin{itemize}
        \item Linear regression (\cref{sec:app-additional-realworldlinear}),
        \item Regression with Neural Networks (NNs) (\cref{sec:app-additional-regnn-cor}),
        \item Binary classification with NNs (\cref{sec:app-additional-binary}),
        \item Implementation details of different heuristically designed $\Pmat$ are in~\cref{sec:app-designp}.
    \end{itemize}
\end{enumerate}

\section{Proofs}
\label{sec:app-proofs}
\subsection{Proof for Proposition~\ref{prop:equiv}}

\tdolsequiprop*

\begin{proof}
In our TD formulation, the reward is $\vr=(\Imat-\gamma \Pmat)\vy$. 
With $\Smat= \Dmat(\Imat-\gamma \Pmat)$,
we have $\Amat = \Xmat^\top \Smat \Xmat$, 
and $\vb = \Xmat^\top \Smat \vy$. 
To verify the first claim, define 
The min-norm solutions found by TD and OLS are respectively
$\vw_{TD} = \Amat^{\pinv}\vb$, 
and $\vw_{LS} = \Xmat^{\pinv}\vy$, where
\begin{small}
\begin{align}
\vw_{TD} =\Amat^\pinv \vb
= (\Xmat^\top \Smat \Xmat)^\pinv \cdot \Xmat^\top \Smat \vy.
\label{eq:TD_min_norm}
\end{align}
\end{small}%
When $\Dmat$ has full support, $\Smat$ is invertible (thus has linearly independent rows/columns). 
Additionally, $\Xmat$ has linearly independent rows so $(\Xmat^\top \Smat \Xmat)^\pinv=\Xmat^\pinv\Smat^{-1}(\Xmat^\top)^\pinv$ \citep[Thm.3]{greville1966note} and the TD solution becomes
\begin{align}
\vw_{TD} = \Amat^{\pinv}\vb 
= \Xmat^\pinv \Smat^{-1} (\Xmat^\top)^\pinv \Xmat^\top \Smat \vy
\end{align}
Finally, when $\Xmat$ has linearly independent rows, $(\Xmat^\top)^\pinv \Xmat^\top=\Imat_n$ so $\vw_{TD} = \Xmat^\pinv \vy=\vw_{LS}$. 

To verify the second part, the TD's linear system is 
\begin{small}
\begin{align}
\Xmat^\top \Smat \Xmat \wvec = \Xmat^\top \Smat \vy,
\end{align}
\end{small}%
which is essentially preconditioning the linear system $\Xmat \wvec = \yvec$ by $\Xmat^\top \Smat$. 
Hence, any solution to the latter is also a solution to TD.
\end{proof}

\subsection{Proof for Theorem~\ref{thm:expected_contraction}}

\lipschitzlemma*

\begin{proof}
\cref{assume:feature_regularity} and the compactness of $\Wcal$ ensure that the linear prediction $z=\vx(s)^\top\vw,\forall s\in\Scal$ will also be in a compact domain. 
Given that $f$ is continuous and invertible (\cref{assume:link_function}), effectively the domain and image of will also be compact. 
Furthermore, since $f'$ is bounded, there exists a constant $L=\max(L_f, L_{f^{-1}})\ge 1$ such that \cref{eq:bi_lipschitz} holds $\forall s_1,s_2\in\Scal$ with $z_1=\vx(s_1)^\top\vw,z_2=\vx(s_2)^\top\vw$, where $L_f,L_{f^{-1}}$ are the Lipschitz constants of 
$f$ and $f^{-1}$ respectively.
\end{proof}

We show the existence of an optimal solution $\vw^*\in\Wcal$ for our update using the Banach fixed-point theorem. 
In the following, we will show that any two parameters $\vw^A,\vw^B\in\Wcal$ will get closer after the one update step and a projection step.
To achieve this, we will first introduce two helpful lemmas in the following. 
To simplify notation, denote their predictions $z^A=\vx^\top \vw^A,z^B=\vx^\top \vw^B$, transformed predictions $y^A=f(z^A),y^B=f(z^B)$ and bootstrap targets $y_{td}^A=f(r+\vx'^\top\vw^A),y_{td}^B=f(r+\vx'^\top\vw^B)$.

\contractioncrossprod*

\begin{proof}
Note that
\begin{align}
(\vw^A-\vw^B)^\top [\overline{g}(\vw^B)-\overline{g}(\vw^A)]
&=(\vw^A-\vw^B)^\top 
\EE[\ [(y^B_{td}-y^B) - (y^A_{td}-y^A)]\cdot\vx\ ]
\\
&=\EE[\ (z^A-z^B)\cdot 
[(y^A-y^B) - (y^A_{td}-y^B_{td})]\ ]
\label{eq:cross_term_bound}
\end{align}
By \cref{lemma:lipschitz} and using the assumption that $f$ is strictly increasing, we have $(z^A-z^B)(y^A-y^B)\ge\frac{1}{L}(z^A-z^B)^2$.
Moreover, the function $\vz\mapsto f(\vr+\gamma \Pmat\vz)$ is ($\gamma L$)-Lipschitz so
\begin{align}
\EE\left[\ (z^A-z^B)(y_{td}^A-y^B_{td})\ \right]
\le \gamma L\cdot \EE[(z^A-z^B)^2].
\end{align}
Plugging these two into \cref{eq:cross_term_bound} completes the proof.
\end{proof}

\contractiongradnorm*

\begin{proof}
To start
\begin{align}
\|\overline{g}(\vw^A)-\overline{g}(\vw^B)\|_2
&=
\|\EE[\ [(y^A_{td}-y^A) - (y^B_{td}-y^B)]\cdot \vx\ ]\|_2
\\
&\le
\sqrt{\EE[\|\vx\|_2^2]}
\sqrt{\EE\left[
[(y^A_{td}-y^A) - (y^B_{td}-y^B)]^2
\right]}
&&\text{Cauchy-Schwartz}
\\
&\le
\sqrt{\EE\left[
[(y^B-y^A) - (y^B_{td}-y^A_{td})]^2
\right]}
&&\text{\cref{assume:feature_regularity}}
\end{align}
%%%%%%%%%%%%%%%%%%%%%%%%%%%%%%%%%%
Let $\delta\defeq y^B-y^A$ and $\delta_{td}\defeq y^B_{td}-y^A_{td}$, then
\begin{align}
\EE\left[(\delta - \delta_{td})^2\right]
&=\EE[\delta^2]+ \EE[\delta_{td}^{2}]-2\EE[\delta\delta_{td}]
\\
&\le
\EE[\delta^2]+ \EE[\delta_{td}^{2}]+2 |\EE[\delta\delta_{td}]|
\\
&\le
\EE[\delta^2]+ \EE[\delta_{td}^{2}]+2 
\sqrt{\EE[\delta^2]\EE[\delta^{2}_{td}]}
&&\text{Cauchy-Schwartz}
\\
&=\left(
\sqrt{\EE[\delta^2]} + \sqrt{\EE[\delta^{2}_{td}]}
\right)^2
\end{align}
%%%%%%%%%%%%%%%%%%%%%%%%%%%%%%%%%%
As a result,
\begin{align}
\|\overline{g}(\vw^A)-\overline{g}(\vw^B)\|_2
&\le
\sqrt{\EE\left[
(y^B-y^A)^2
\right]}
+
\sqrt{\EE\left[
(y^B_{td}-y^A_{td})^2
\right]}
\\
&\le L\left\{
\sqrt{\EE\left[
(z^B-z^A)^2
\right]}
+
\sqrt{\EE\left[
(z^B_{td}-z^A_{td})^2
\right]}
\right\}
\label{eq:grad_norm_bound}
\end{align}
%%%%%%%%%%%%%%%%%%%%%%%%%%%%%%%%%%
Finally, note that
\begin{align}
\EE\left[(z_{td}^B - z_{td}^A)^2\right]
&=
\EE\left[\ [
(r+\gamma \vx'^{\top}\vw^B)
-(r+\gamma \vx'^{\top}\vw^A)
]^2\ \right]
\\
&=
\gamma^2 \EE\left[\ (\vx'^{\top}\vw^B
- \vx'^{\top}\vw^A)^2\ \right]
\\
&=
\gamma^2 \EE\left[\ (z^B - z^A)^2\ \right]
\end{align}
where the last line is because both $s,s'$ are assumed to be from the stationary distribution.
Plugging this to \cref{eq:grad_norm_bound} and using the fact that $\gamma\le 1$ complete the proof.
\end{proof}

Now we are ready to prove the contraction.
Given two iterates $\vw_t^A,\vw_t^B\in\Wcal$ and their updates
\begin{align}
\vw_{t+1}^A
=\Pcal(\vw^A_t+\alpha \overline{g}(\vw^A_t))
\qquad
\vw_{t+1}^B
=\Pcal(\vw^B_t+\alpha \overline{g}(\vw^B_t)),
\end{align}
the following theorem shows that the projected update is a contraction mapping.

\expectcontraction*

\begin{proof}
For $\vw^A_t,\vw^B_t\in\Wcal$, with probability one, for any $t\in\NN_0$
\begin{align}
\|\vw^A_{t+1}-\vw^B_{t+1}\|_2^2
&\le
\|\vw^A_{t}+\alpha \overline{g}(\vw^A_t)-(\vw^B_{t}+\alpha \overline{g}(\vw^B_t))\|_2^2
\end{align}
since projection onto a convex set is contracting.
Then we can further expand it as
\begin{align}
\|\vw^A_{t+1}-\vw^B_{t+1}\|_2^2
&\le
\|\vw^A_t-\vw_{t}^B\!\|_2^2
-2\alpha(\vw_t^A - \vw_{t}^B)^\top (\overline{g}(\vw_t^B)-\overline{g}(\vw_t^A))
\nonumber\\
&\qquad\qquad 
+\alpha^2\|\overline{g}(\vw_t^A)-\overline{g}(\vw_t^B)\|_2^2
% \label{eq:update_norm_bound}
\end{align}
where we can bound the second and third terms using \cref{lemma:cross_prod} and \cref{lemma:grad_norm} respectively so
\begin{align}
\|\vw^A_{t+1}-\vw^B_{t+1}\|_2^2
&\le 
\|\vw^A_t-\vw^B_{t}\|_2^2
-\left(
2\alpha\left(\frac{1}{L}-\gamma L\right)
-4L^2\alpha^2
\right)
\EE\left[ (z^A_t - z^B_t)^2 \right]
\end{align}
By choosing $\alpha=\frac{1-\gamma L^2}{4L^3}>0$
\begin{align}
\|\vw^A_{t+1}-\vw^B_{t+1}\|_2^2
&\le 
\|\vw^A_t-\vw^B_{t}\|_2^2
-\left(\frac{1-\gamma L^2}{2L^2}\right)^2
\EE\left[ (z^A_t - z^B_t)^2 \right]
\\
&= 
\|\vw^A_t-\vw^B_{t}\|_2^2
-\left(\frac{1-\gamma L^2}{2L^2}\right)^2
\EE\left[ (\vx^\top(\vw^A_t - \vw^B_t))^2 \right]
\\
&\le 
\|\vw^A_t-\vw^B_{t}\|_2^2
-\left(\frac{1-\gamma L^2}{2L^2}\right)^2
\EE\left[\|\vx\|_2^2\right] 
\|\vw^A_t - \vw^B_t\|_2^2
&&\text{Cauchy-Schwartz}
\\
&\le 
\|\vw^A_t-\vw^B_{t}\|_2^2
-\left(\frac{1-\gamma L^2}{2L^2}\right)^2 
\|\vw^A_t - \vw^B_t\|_2^2
&&\text{\cref{assume:feature_regularity}}
% \label{eq:consecutive_bound}
\end{align}
Finally, due to \cref{assume:bound_discount} and \cref{lemma:lipschitz}, we have
$1-\left(\frac{1-\gamma L^2}{2L^2}\right)^2\in[0,1)$
so it is a contraction mapping w.r.t. $\|\cdot\|_2$.
By the Banach fixed-point theorem,  the projected update admits a unique fixed point $\vw^*\in\Wcal$ and our iterate converges to it.
\end{proof}

\subsection{Proof for Theorem~\ref{thm:expected_update_rate}}

Once we know that the algorithm is convergent from \cref{thm:expected_contraction}, we are now ready to prove the convergence rate.

\expectconverge*

\begin{proof}
With probability 1, for any $t\in\NN_0$
\begin{align}
\|\vw^*-\vw_{t+1}\|_2^2
&=
\|\vw^*-\vw_{t}\|_2^2
-2\alpha(\vw^*-\vw_{t})^\top 
(\overline{g}(\vw_t) - \overline{g}(\vw^*))
+\alpha^2\|\overline{g}(\vw_t)-\overline{g}(\vw^*)\|_2^2
\\
&\le 
\|\vw^*-\vw_{t}\|_2^2
-\left(
2\alpha\left(\frac{1}{L}-\gamma L\right)
-4L^2\alpha^2
\right)
\EE\left[ (z^* - z_t)^2 \right]
\end{align}
where $z_t\defeq \vx^\top\vw_t$. 
The last inequality uses \cref{lemma:cross_prod} and \cref{lemma:grad_norm} with $\vw^A=\vw_t$ and $\vw_B=\vw^*$.
%%%%%%%%%%%%%%%%%%%%%%%%%%%%%%%%%%%%%%%%%%%%
Using $\alpha=\frac{1-\gamma L^2}{4L^3}>0$
\begin{align}
\|\vw^*-\vw_{t+1}\|_2^2
&\le 
\|\vw^*-\vw_{t}\|_2^2
-\left(\frac{1-\gamma L^2}{2L^2}\right)^2
\EE\left[ (z^* - z_t)^2 \right]
\label{eq:consecutive_bound}
\end{align}
%%%%%%%%%%%%%%%%%%%%%%%%%%
Telescoping sum gives
\begin{align}
\left(\frac{1-\gamma L^2}{2L^2}\right)^2
\times \sum_{t=0}^{T-1}
\EE\left[ (z^* - z_t)^2 \right]
\le 
\sum_{t=0}^{T-1}
(\|\vw^*-\vw_{t}\|_2^2
-\|\vw^*-\vw_{t+1}\|_2^2)
\le 
\|\vw^*-\vw_{0}\|_2^2
\end{align}
%%%%%%%%%%%%%%%%%%%
By Jensen's inequality
\begin{align}
\EE\left[ (z^* - \overline{z}_T)^2 \right]
\le
\frac{1}{T} \sum_{t=0}^{T-1}
\EE\left[ (z^* - z_t)^2 \right]
\le 
\left(\frac{2L^2}{1-\gamma L^2}\right)^2
\frac{\|\vw^*-\vw_{0}\|_2^2}{T}
\end{align}
Finally, since we assume that $\|\vx(s)\|_2^2\le 1,\forall s$, we have $\EE\left[ (z^* - z_t)^2 \right]\ge \omega\|\vw^*-\vw_{t}\|_2^2$ where $\omega$ is the maximum eigenvalue of the steady-state feature covariance matrix $\Sigmamat=\Xmat^\top \Dmat\Xmat=\sum_{s}D(s)\vx(s)\vx(s)^\top$.
%%%%%%%%%%%%%%%%%%%%%%%%%%%%%%
Therefore, \cref{eq:consecutive_bound} leads to
\begin{align}
\|\vw^*-\vw_{t+1}\|_2^2
&\le 
\left(1-\omega
\left(\frac{1-\gamma L^2}{2L^2}\right)^2\right)
\|\vw^*-\vw_{t}\|_2^2
\\
&\le 
\exp\left(-\omega
\left(\frac{1-\gamma L^2}{2L^2}\right)^2\right)
\|\vw^*-\vw_{t}\|_2^2
&& \forall x\in\RR, 1-x\le e^{-x} 
\end{align}
Repeatedly applying this bound gives \cref{eq:exp_converge}.
\end{proof}

\subsection{Proof of Theorem~\ref{thm:sample_converge_rate}}

Here, we analyze the sample-based update (\ref{eq:sample_update}).
To account for randomness, let $\sigma^2\defeq \EE[\|g_{t}(\vw^*)\|_2^2]$, the variance of the TD update at the stationary point $\vw^*$ under the stationary distribution. 

\optimalitycondition*

\begin{proof}
Since $\vw^*$ is the fixed point of the iterative update (\ref{eq:expected_update}), it satisfies
\begin{align}
\vw^*=\argmin_{\vw\in\Wcal}\ \|\vw-(\vw^*+\alpha\overline{g}(\vw^*))\|_2^2.
\end{align}
This is a constrained optimization with a well-defined objective ($L_2$ distance), which means we can apply the first-order optimality condition~\citep[Sec.4.2.3]{boyd2004convex} and $\vw^*$ satisfies
\begin{align}
[\vw^*-(\vw^*+\alpha\overline{g}(\vw^*))]^\top
[\vw-\vw^*]\ge 0
\quad\forall \vw\in\Wcal.
\end{align}
% Here, $\vw^*+\alpha\overline{g}(\vw^*)$ is the point that could be outside of $\Wcal$ (hence the projection). 
Intuitively, this condition means that $\vw^*-(\vw^*+\alpha\overline{g}(\vw^*))$ makes a non-obtuse angle with any feasible direction $\vw-\vw^*$.
Simplifying this equation gives the desired result (\ref{eq:optimality_condition}).
\end{proof}

\gradnormsample*

\begin{proof}
To start
\begin{align}
\EE[\|g_t(\vw)\|_2^2]
&=
\EE[\|g_t(\vw)-g_t(\vw^*)+g_t(\vw^*)\|_2^2]
\\&\le
\EE[(\|g_t(\vw^*)\|_2+
\|g_t(\vw)-g_t(\vw^*)\|_2)^2]
&& \text{Triangle inequality}
\\&\le
2\EE[(\|g_t(\vw^*)\|_2^2]+
2\EE[\|g_t(\vw)-g_t(\vw^*)\|_2^2]
&& (a+b)^2 \le 2a^2 + 2b^2
\\&\le
2\sigma^2+
2\EE\left[\ 
\|[(y_{t,td}-y_t)-(y_{t,td}^*-y_t^*)]\vx_t\|_2^2
\ \right]
\\&\le
2\sigma^2+
2\EE\left[\ 
((y_{t,td}-y_t)-(y_{t,td}^*-y_t^*))^2
\ \right]
&& \text{\cref{assume:feature_regularity}}
\\&=
2\sigma^2+
2\EE\left[\ 
((y_t^*-y_t)-(y_{t,td}^*-y_{t,td}))^2
\ \right]
\\&\le
2\sigma^2+
4\left(\EE[(y_t^*-y_t)^2]+
\EE[(y_{t,td}^*-y_{t,td})^2]\right)
&& (a-b)^2\le 2a^2+2b^2
\label{eq:grad_norm_bound_sto}
\end{align}
where $y_t^*\defeq f(z_t^*)=f(\vx_t^\top\vw^*)$ and $y^*_{t,td}\defeq f(z^*_{t,td})=f(r_t+\vx_{t+1}^\top\vw^*)$. 
%%%%%%%%%%%%%%%%%%%%%%%%%%%%%%%%%%
Note that by \cref{lemma:lipschitz}
\begin{align}
\EE[(y_t^*-y_t)^2]+
\EE[(y_{t,td}^*-y_{t,td})^2]
\le 
L^2\left\{
\EE\left[
(z_t^*-z_t)^2
\right]
+
\EE\left[
(z^*_{t,td}-z_{t,td})^2
\right]
\right\}.
\end{align}
%%%%%%%%%%%%%%%%%%%%%%%%%%%%%%%%%%
Finally, 
\begin{align}
\EE\left[(z_{t,td}^* - z_{t,td})^2\right]
&=
\EE\left[\ [
(r_t+\gamma \vx_{t+1}^{\top}\vw^*)
-(r_t+\gamma \vx_{t+1}^{\top}\vw_t)
]^2\ \right]
\\
&=
\gamma^2 \EE\left[\ (\vx_{t+1}^{\top}\vw^*
- \vx_{t+1}^{\top}\vw_t)^2\ \right]
\\
&=
\gamma^2 \EE\left[\ (z_t^* - z_{t})^2\ \right]
\end{align}
where the last line is because both $s_t,s_{t+1}$ are from the stationary distribution. 
Combining these with \cref{eq:grad_norm_bound_sto} gives
\begin{align}
\EE[\|g_t(\vw)\|_2^2]
&\le 
2\sigma^2 + 4L^2(1+\gamma^2)
\EE\left[(z_t^*-z_t)^2\right]
\\&\le 
2\sigma^2 + 8L^2
\EE\left[(z_t^*-z_t)^2\right].
&& 0\le \gamma\le 1
\end{align}
\end{proof}

Now we are ready to present the convergence when using i.i.d.\ sample for the update:

\sampleconverge*

\begin{proof}

First note that for any $t\in\NN_0$
\begin{align}
\|\vw^*-\vw_{t+1}\|_2^2
&=
\|\Pcal(\vw^*)-
\Pcal(\vw_{t}+\alpha_t g_t(\vw_t))\|_2^2
\\&\le
\|\vw^*
-(\vw_{t}+\alpha_t g_t(\vw_t))\|_2^2
\end{align}
since $\vw^*\in\Wcal$ and projection onto a convex set is contracting.
Therefore,
\begin{align}
\EE[\|\vw^*-\vw_{t+1}\|_2^2]
&\le
\EE[\|\vw^*-\vw_{t}-\alpha_t g_t(\vw_t))\|_2^2]
\\&=
\EE[\|\vw^*-\vw_{t}\|_2^2]
-2\alpha_t\EE[(\vw^*-\vw_{t})^\top g_t(\vw_t)]
+\alpha_t^2\EE[\|g_t(\vw_t)\|_2^2]
\label{eq:sample_based_expansion}
\end{align}
Let's take a look at the second term more closely.
\begin{align}
\EE[(\vw^*-\vw_{t})^\top g_t(\vw_t)]
&=
\EE[(\vw^*-\vw_{t})^\top (g_t(\vw_t)-g_t(\vw^*))]
+
\EE[(\vw^*-\vw_{t})^\top g_t(\vw^*)]
\end{align}
and for the last term here we have
\begin{align}
\EE[(\vw^*-\vw_{t})^\top g_t(\vw^*)]
&=
\EE[\EE[(\vw^*-\vw_{t})^\top g_t(\vw^*)|\vw_t]]
\\&=
\EE[(\vw^*-\vw_{t})^\top \overline{g}(\vw^*)]
\ge0
\end{align}
which is due to \cref{lemma:optimality_condition}.
As a result, \cref{eq:sample_based_expansion} becomes
\begin{align}
\EE[\|\vw^*-\vw_{t+1}\|_2^2]
&\le
\EE[\|\vw^*-\vw_{t}\|_2^2]
\nonumber
\\&\qquad
-2\alpha_t\EE[(\vw^*-\vw_{t})^\top (g_t(\vw_t)-g_t(\vw^*))]
+\alpha_t^2\EE[\|g_t(\vw_t)\|_2^2]
\label{eq:sample_based_expansion_2}
\end{align}
Note that \cref{lemma:cross_prod} holds for any $\vw^A,\vw^B\in\Wcal$ and the expectation in $\overline{g}(\vw)=\EE[g_t(\vw)]$ is based on the sample $(s_t,r_t,s_{t+1})$, \emph{regardless of the choice of $\vw$}.
Thus, one can choose \emph{any} $\vw$ and then $\EE[g_t(\vw)|\vw]=\overline{g}(\vw)$. 
As a result, both \cref{lemma:cross_prod} and \cref{lemma:grad_norm_sample} can be applied to $\EE[(\vw^*-\vw_t)^\top (g_t(\vw_t)-g_t(\vw^*))|\vw_t]$ and $\EE[\|g_t(\vw_t)\|_2^2|\vw_t]$, respectively, in the following.
Thus \cref{eq:sample_based_expansion_2} becomes
\begin{align}
\EE[\|\vw^*-\vw_{t+1}\|_2^2]
&\le
\EE[\|\vw^*-\vw_{t}\|_2^2]
-2\alpha_t\EE[\EE[(\vw^*-\vw_{t})^\top (g_t(\vw_t)-g_t(\vw^*))|\vw_t]]
\nonumber
\\&\qquad
+\alpha_t^2\EE[\EE[\|g_t(\vw_t)\|_2^2|\vw_t]]
\\&\le 
\EE[\|\vw^*-\vw_{t}\|_2^2]
\nonumber
\\&\qquad
-\left(
2\alpha_t\left(\frac{1}{L}-\gamma L\right)
-8L^2\alpha_t^2
\right)
\EE\left[ (z^* - z_t)^2 \right]
+2\alpha_t^2\sigma^2
\\&\le
\EE[\|\vw^*-\vw_{t}\|_2^2]
-\alpha_t\left(\frac{1}{L}-\gamma L\right)
\EE\left[ (z^* - z_t)^2 \right]
+2\alpha_t^2\sigma^2
\end{align}
where the last inequality is due to $\alpha_t=\frac{1}{\sqrt{T}}\le\frac{1-\gamma L^2}{8L^3}$.
Then telescoping sum gives
\begin{align}
\frac{1}{\sqrt{T}}\left(\frac{1}{L}-\gamma L\right)
\sum_{t=0}^{T-1}\EE\left[ (z^* - z_t)^2 \right]
\le \|\vw^*-\vw_0\|_2^2 + 2\sigma^2
\\
\Longleftrightarrow
\sum_{t=0}^{T-1}\EE\left[ (z^* - z_t)^2 \right]
\le 
\frac{\sqrt{T} L}{1-\gamma L^2}
\left(
\|\vw^*-\vw_0\|_2^2 + 2\sigma^2
\right).
\end{align}
Finally, Jensen's inequality completes the proof
\begin{align}
\EE\left[ (z^* - \overline{z}_T)^2 \right]
\le 
\frac{1}{T}
\sum_{t=0}^{T-1}\EE\left[ (z^* - z_t)^2 \right]
\le 
\frac{L\left(
\|\vw^*-\vw_0\|_2^2 + 2\sigma^2
\right)}
{\sqrt{T}(1-\gamma L^2)}.
\end{align}
\end{proof}

\section{Experiment Details: Synthetic Data}
\label{sec:app-reproduce}
% This section describes details for reproducing all experiments in this paper, with additional results that are not shown in the main body due to space limit. 

% \subsection{Verifying Min-Norm Equivalence}
\label{sec:app-min-norm}
This subsection provides details of the empirical verification in \cref{sec:min_norm_equivalence}.

Each element of the input matrix $\Xmat$ is drawn from the standard normal distribution $\Ncal(0,1)$.
This (almost surely) guarantees that $\Xmat$ has linearly independent rows in the overparametrization regime (i.e., when $n<d$).
The true model $\vw^*$ is set to be a vector of all ones.
Each label $y_i$ is generated by $y_i=\vx_i^\top\vw^*+\epsilon_i$ with noise $\epsilon_i\sim\Ncal(0, 0.1^2)$.

We test various transition matrices $\Pmat$ as shown in \cref{tab:algo_relations}. 
For \texttt{Random}, each element of $\Pmat$ is drawn from the uniform distribution $U(0, 1)$ and then normalized so that each row sums to one. 
The \texttt{Deficient} variant is exact the same as \texttt{Random}, except that the last column is set to all zeros before row normalization. 
This ensures that the last state is never visited from any state, thus not having full support in its stationary distribution. 
\texttt{Uniform} simply means every element of $\Pmat$ is set to $1/n$ where $n=100$ is the number of training points.
\texttt{Distance (Close)} assigns higher transition probability to points closer to the current point, where the element in the $i$th row and the $j$th column is first set to $\exp(-(y_i-y_j)^2/2)$ then the whole matrix is row-normalized. 
Finally, \texttt{Distance (Far)} uses $1-\exp(-(y_i-y_j)^2/2)$ before normalization.
The last two variants are used to see if similarity between points can play a role in the transition when using our TD algorithm.

As shown in \cref{tab:algo_relations}, the min-norm solution $\vw_{TD}$ is very close to the min-norm solution of OLS as long as $n<d$ and $\Dmat$ has full support (non-deficient $\Pmat$). 
The choice of $\Pmat$ only has little effect in such cases. 
This synthetic experiment verifies our analysis in the main text.

\section{Experiment Details: Real-world Data}
\label{sec:app-reproduce-real}
\subsection{Implementation Details}
\label{sec:app-reallworld-commonsetup}

Deep learning experiments are based on tensorflow~\citep{tensorflow2015-whitepaper}, version 2.11.0, except that the ResNN18 experiments are using pytorch~\citep{paszke2017automatic}. Code is available at \url{https://github.com/yannickycpan/reproduceSL.git}. Below introduce common setup; different settings will be specified when mentioned. 

\textbf{Datasets.} We use three popular datasets house price~\citep{house}, execution time~\citep{exectime} and Bikeshare~\citep{fanaee2013bikedata} as benchmark datasets. We have performed one-hot encoding for all categorical variables and removed irrelevant features such as date and year as done by~\citep{pan2020implicit}. This preprocessing results in $114$ features. The Bikeshare dataset, which uses count numbers as its target variable, is popularly used for testing Poisson regressions. The air quality dataset~\citep{vito2016airquality} is loaded by using package \emph{ucimlrepo} by \texttt{from ucimlrepo import fetch\_ucirepo}.

For image datasets, we employ CNN consisting of three convolution layers with the number of kernels  $32$, $64$, and $64$, each with a filter size of $2\times 2$. This was followed by two fully connected hidden layers with $256$ and $128$ units, respectively, before the final output layer. 
On all image datasets, Adam optimizer is used and the learning rate sweeps over $\{0.003, 0.001, 0.0003\}$, $\gamma \in \{0.01, 0.1, 0.2\}$, $\tau \in \{0.01, 0.1\}$. The neural network is trained with mini-batch size $128$. For ResNN18, we use learning rate $0.001$ and $\tau=0.01$.

\textbf{Hyperparameter settings.} For regression and binary classification tasks, we employ neural networks with two hidden layers of size $256\times 256$ and ReLU activation functions. These networks are trained with a mini-batch size of 128 using the Adam optimizer~\citep{Kingma2015AdamAM}. In our TD algorithm, we perform hyperparameter sweeps for $\gamma \in \{0.1, 0.9\}$, target network moving rate $\tau \in \{0.01, 0.1\}$. For all algorithms we sweep learning rate $\alpha \in \{0.0003, 0.001, 0.003, 0.01\}$, except for cases where divergence occurs, such as in Poisson regression on the Bikeshare dataset, where we additionally sweep $\{0.00003, 0.0001\}$. Training iterations are set to $15$k for the house data, $25$k for Bikeshare, and $30$k for other datasets. We perform random splits of each dataset into $60\%$ for training and $40\%$ for testing for each random seed or independent run. Hyperparameters are optimized over the final $25\%$ of evaluations to avoid divergent settings. The reported results in the tables represent the average of the final two evaluations after smoothing window. 

\textbf{Naming rules.} For convenience, we repeat naming rules from the main body here. TDReg: our TD approach, with its direct competitor being Reg (conventional $l_2$ regression). 
Reg-WP: Utilizes the same probability transition matrix as TDReg but does not employ bootstrap targets.
This baseline can be used to assess the effect of bootstrap and transition probability matrix. On Bikesharedata, TDReg uses an exponential link function designed for handling counting data, and the baseline becomes Poisson regression correspondingly. 

\subsection{Additional Results on Linear Regression}
\label{sec:app-additional-realworldlinear}

As complementary results to the execute time dataset presented in ~\cref{sec:experiment}, we provide results on two other regression datasets below (see ~\cref{fig:bike-time-rho-P}). We consistently observe that the TD algorithm performs more closely to the underlying best estimator. Notably, the performance gain tends to increase as the correlation strengthens or as the transition probability matrix better aligns with the data correlation.

When implementing FGLS, we initially run OLS to obtain the residuals. Subsequently, an algorithm is employed to fit these residuals for estimating the noise covariance matrix. This matrix is then utilized to compute the closed-form solution. The implementation is done by API from~\citet{seabold2010statsmodels}.

In this set of experiments, to speed up multiple matrix inversion and noise sampling, we randomly take $500$ subset of the original datasets for training and testing. 

\begin{figure*}
  \centering
  \subfigure[bikeshare dataset]{
  \includegraphics[width=\textwidth]{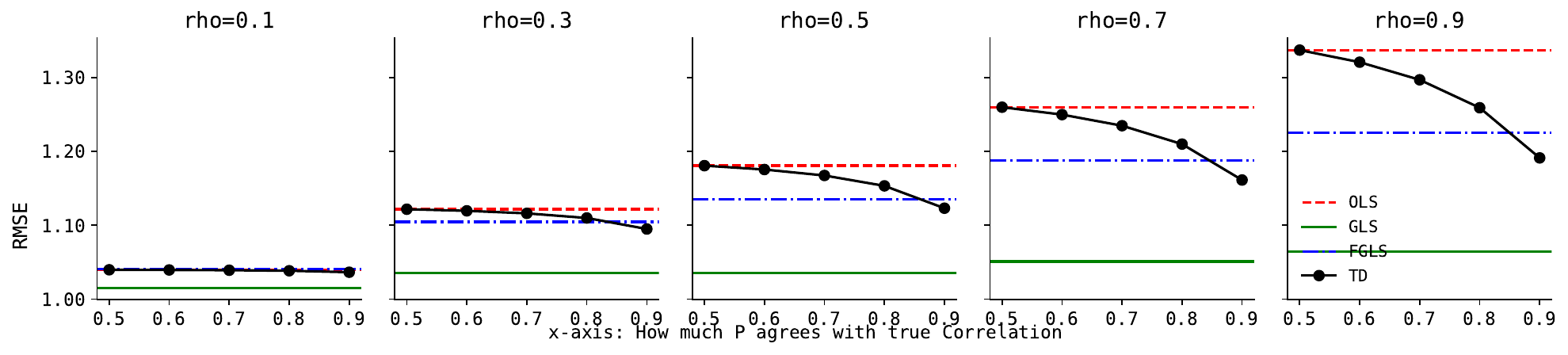}
  }
  \subfigure[house price dataset]{
  \includegraphics[width=\textwidth]{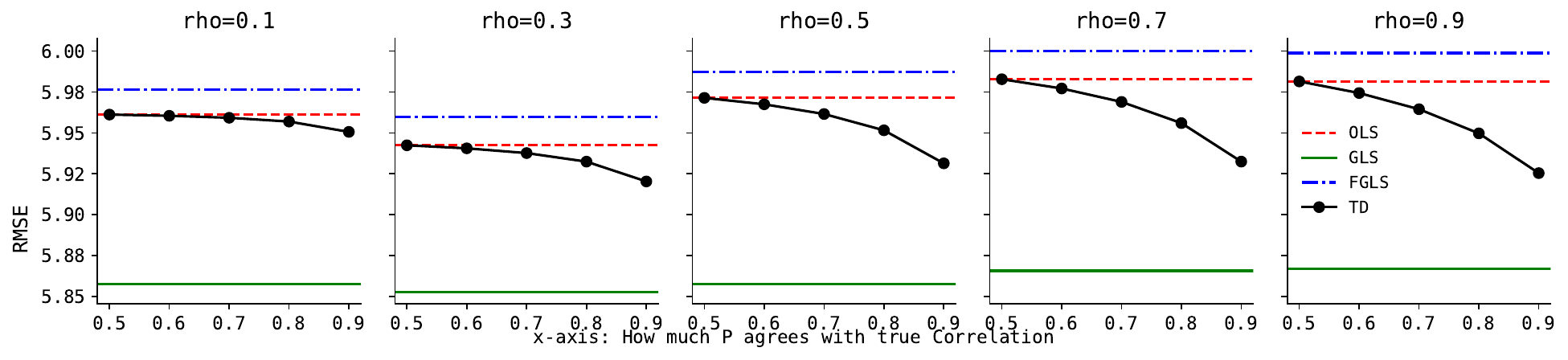}
  }
  \vspace{-0.4cm}
  \caption{\small Test Root Mean Squared Error (RMSE) versus different values of \( P \) on bikeshare and execution time dataset. From left to right, the noise correlation coefficient increases (\( \rho \in \{0.1, 0.3, 0.5, 0.7, 0.9\} \)). Each plot's x-axis represents the degree of alignment between the transition matrix \( P \) and the true covariance matrix that generates the noise. Consistent with our expectations, as \( P \) approaches the true correlation – implying a higher likelihood of data points with positively correlated noise transitioning from one to another – the solution derived from TD increasingly approximates the optimal one. Furthermore, as the correlation among the data intensifies, TD's solution is closer to optimum and one can see larger gap between TD and OLS/FGLS. The results are averaged over $30$ runs.}
  \label{fig:bike-time-rho-P}
\end{figure*}

\subsection{Additional Results: Regression with Correlated Noise with NNs}\label{sec:app-additional-regnn-cor}

Similar to that has been shown in ~\cref{fig:extimelc}, we show learning curves on increasingly strong correlated noise in ~\cref{fig:houselc}. In this set of experiments, to speed up noise sampling, we randomly take $1$k subset of the original datasets for training and testing. The covariance matrix used to generate correlated noise is specified in~\cref{sec:app-reproduce}.

\begin{figure}
  \centering
  \subfigure[House, $\sigma=10$]{
    \includegraphics[width=\figwidththree]{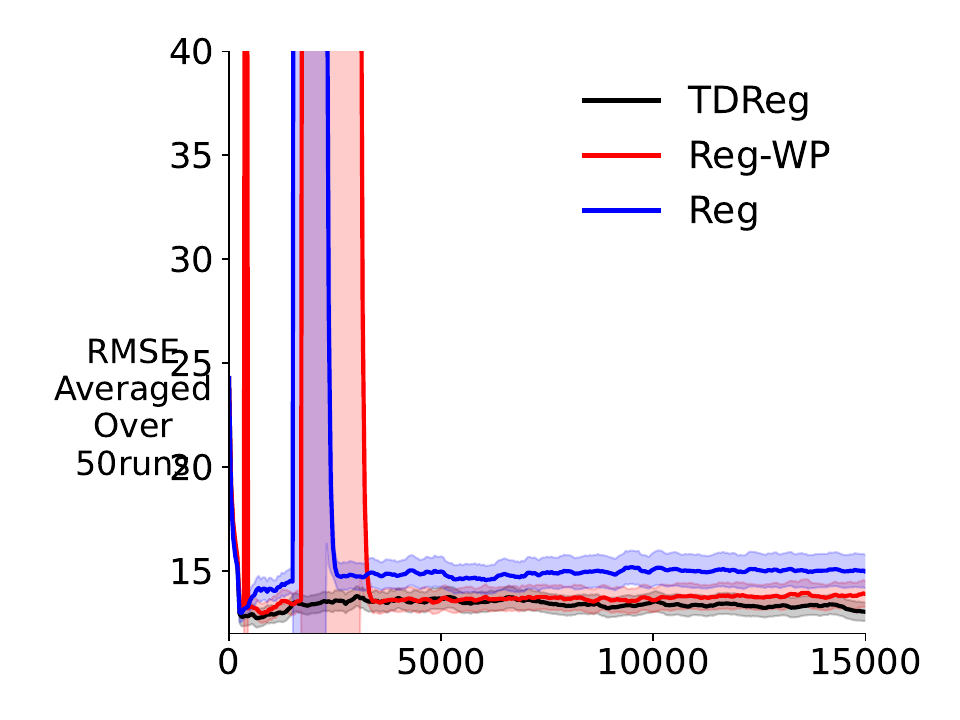}
  }
  \subfigure[House, $\sigma=20$]{
    \includegraphics[width=\figwidththree]{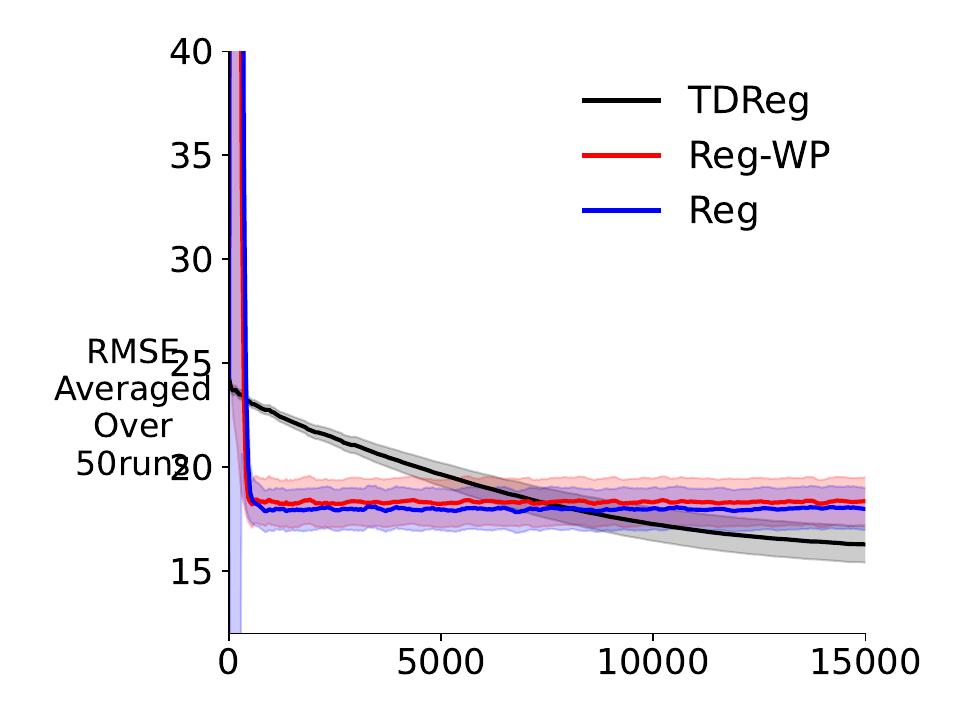}
  }
  \subfigure[House, $\sigma=30$]{
    \includegraphics[width=\figwidththree]{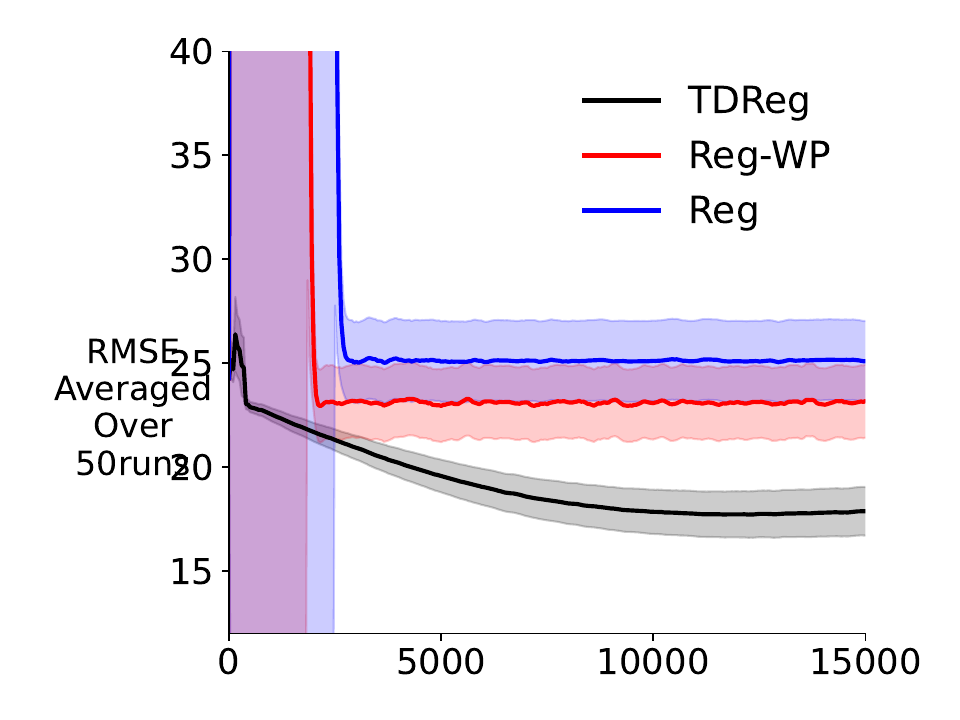}
  }
  \caption{Learning curves on house dataset. From left to right, the noise variance increases, with $\sigma\in {10, 20, 30}$. The results are averaged over 50 runs. Due to the addition of large-scale noise, the two baseline algorithms, Reg-WP and Reg, exhibit high instability during the early learning stages and ultimately converge to a suboptimal solution compared to our TD approach.
  }
  \label{fig:houselc}
\end{figure}

\subsection{Additional Results: Binary Classification with NNs}\label{sec:app-additional-binary}

For binary classification, we utilize datasets from Australian weather~\citep{ausweather}, 
and travel insurance~\citep{travelinsurance}. The aim of this series of experiments is to examine: 1) the impact of utilizing a link function; 2) with a specially designed transition probability matrix, the potential unique benefits of the TD algorithm in an imbalanced classification setting. Our findings indicate that: 1) the link function significantly influences performance; and 2) while the transition matrix proves beneficial in addressing class imbalance, this advantage seems to stem primarily from up/down-sampling rather than from the TD bootstrap targets.

Recall that we employ three intuitive types of transition matrices: $P(\xvec'|\xvec)$ is larger when the two points $\vx, \vx'$ 
are 1) similar (denoted as $P_s$); 
2) far apart ($P_f$); 
3) $P(\xvec'|\xvec)=1/n, \forall \xvec, \xvec' \in \mathcal{X}$  ($P_c$). 

Since our results in \cref{sec:experiments-standard} and Figure~\ref{fig:sensi} indicate that $\Pmat$ does not significantly impact regular regression, we conducted experiments on binary classification tasks and observed their particular utility when dealing with imbalanced labels. We define $P_s$ by defining the probability of transitioning to the same class as $0.9$ and to the other class as $0.1$. Table~\ref{tab:binary} presents the reweighted balanced results for three binary classification datasets with class imbalance. It is worth noting that in such cases, Classify-WP serves as both 1) no bootstrap baseline and 2) the upsampling techniques for addressing class imbalance in the literature~\citep{kubat2000imb}. 

Observing that TD-Classify and Classify-WP yield nearly identical results and Classify (without using TD's sampling) is significantly worse, suggesting that the benefit of TD arises from the sampling distribution rather than the bootstrap estimate in the imbalanced case. Furthermore, $P_f, P_s$ yield almost the same results in this scenario since they provide the same stationary distribution (equal weight to each class), so here Classify-WP represents both. We also conducted tests using $P_c$, which yielded results that are almost the same as Classify, and have been omitted from the table. In conclusion, the performance difference of TD in the imbalanced case arises from the transition probability matrix rather than the bootstrap target. The transition matrix's impact is due to the implied difference in the stationary distribution. 

\begin{table}[htbp]
\caption{\small Binary classification with imbalance. $0.0073$ means $0.73\%$ misclassification rate. The results are smoothed over $5$ evaluations before averaging over $5$ random seeds.}
\label{tab:binary}
\centering
\begin{tabular}{|c|c|c|c|c|}
\hline
\diagbox{Dataset}{Algs} & \textbf{TD-Classify} & \textbf{Classify-WP} & \textbf{Classify} & \textbf{TD-WOF} \\
\hline
\textbf{Insurance} & \textcolor{red}{$0.0073\pm0.0001$} & \textcolor{red}{$0.0073\pm0.0001$} & $0.0144\pm0.0003$ & $0.4994\pm0.0005$ \\
\hline
\textbf{Weather} & \textcolor{red}{$0.0695\pm0.0008$} & $0.0701\pm0.0008$ & $0.0913\pm0.0010$ & $0.4965\pm0.0031$ \\
\hline
\end{tabular}
\end{table}

\textbf{The usage of inverse link function}. The results of TD on classification without using a transformation/link function are presented in Table~\ref{tab:binary} and are marked by the suffix 'WOF.' These results are not surprising, as the bootstrap estimate can potentially disrupt the TD target entirely. 
Consider a simple example where a training example $\xvec$ has a label of one and transitions to another example, also labeled one. 
Then the reward ($r=y-\gamma y'$) will be $1-\gamma$. If the bootstrap estimate is negative, the TD target might become close to zero or even negative, contradicting the original training label of one significantly.

\begin{figure}
\centering
\subfigure[weather]{
\includegraphics[width=\figwidththree]{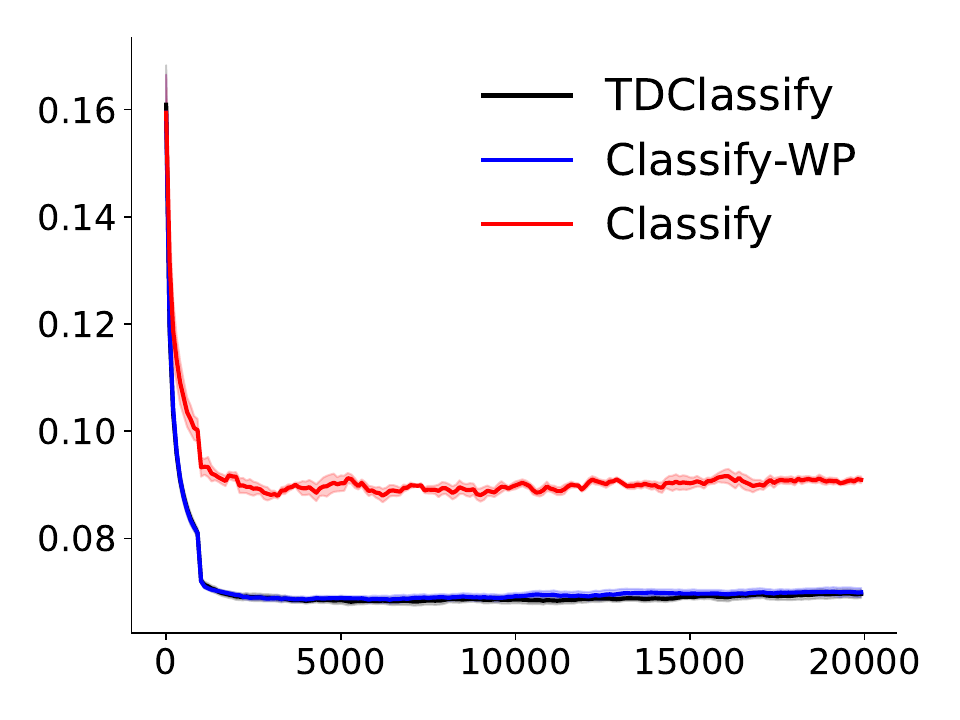}
}
\subfigure[insurance]{
\includegraphics[width=\figwidththree]{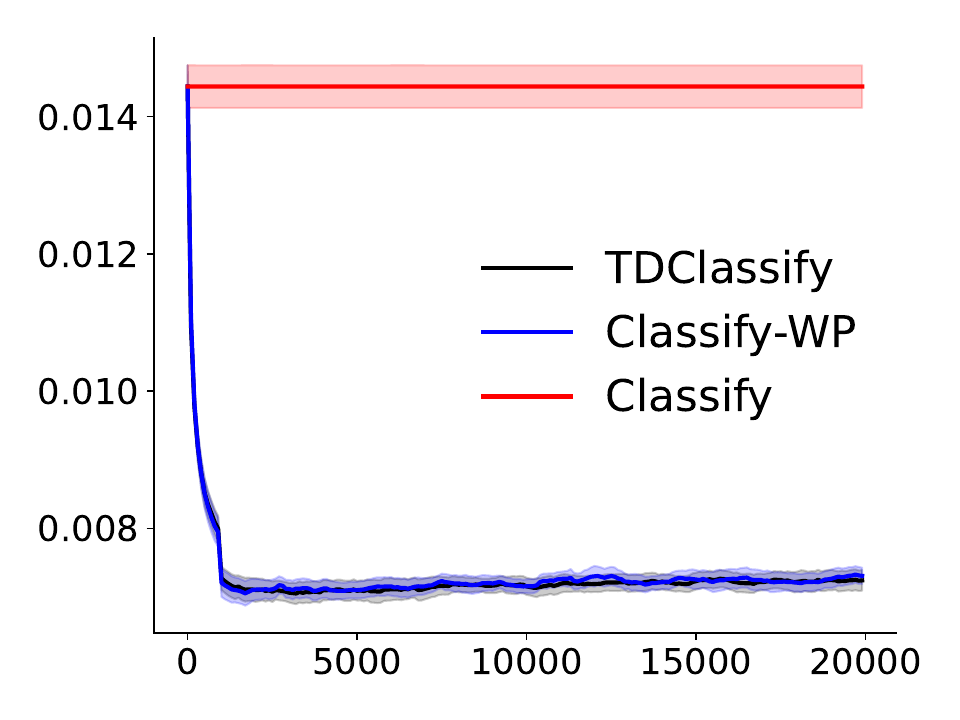}
}
\caption{Learning curves of binary classification with imbalance: balanced testing error v.s. training steps. The results have been smoothed using a $5$-point window before averaging over $5$ runs.}
\label{fig:binarylc}
\end{figure}

On binary classification dataset, weather and insurance, the imbalance ratios (proportion of zeros) are around $72.45\%, 88.86\%$ respectively. We set number of iterations to $20$k for both and it is sufficient to see convergence.
Additionally, in our TD algorithm, to prevent issues with inverse link functions, we add or subtract $1 \times 10^{-15}$ when applying them to values of $0$ or $1$ in classification tasks. It should be noted that this parameter can result in invalid values depending on concrete loss implementation, usually setting $<10^{-7}$ should be generally good. 

On those class-imbalanced datasets, when computing the reweighted testing error, we use the function from \url{https://scikit-learn.org/stable/modules/generated/sklearn.metrics.balanced_accuracy_score.html}. 

\subsection{On the Implementation of P}\label{sec:app-designp}

As we mentioned in Section~\ref{sec:experiment}, to investigate the effect of transition matrix, we implemented three types of transition matrices: $\Pmat(\xvec'|\xvec)$ is larger when the two points $x, x'$ 1) are similar; 2) when they are far apart; 3) $\Pmat(\xvec'|\xvec)=1/n, \forall \xvec, \xvec' \in \mathcal{X}$. To expedite computations, $P_s, P_f$ are computed based on the training targets instead of the features. The rationale for choosing these options is as follows: the first two may lead to a reduction in the variance of the bootstrap estimate if two consecutive points are positively or negatively correlated. 

The resulting matrix may not be a valid stochastic matrix, we use DSM projection~\citep{wang2010learning} to turn it into a valid one.

We now describe the implementation details. For first choice, given two points $(\xvec_1, y_1), (\xvec_2, y_2)$, the formulae to calculate the similarity is: 
\begin{equation}
    k(y_1, y_2) = \exp(-(y_1-y_2)^2/v) + 0.1
\end{equation}
where $v$ is the variance of all training targets divided by training set size. The second choice is simply $1- \exp(-(y_1-y_2)^2/v)$. 

Note that the constructed matrix may not be a valid probability transition matrix. To turn it into a valid stochastic matrix, we want: $\Pmat$ itself must be row-normalized (i.e., $\Pmat\onevec=\onevec$). To ensure fair comparison, we want equiprobable visitations for all nodes/points, that is, the stationary distribution is assumed to be uniform: $\vpi=\frac{1}{n}\onevec$.

The following proposition shows the necessary and sufficient conditions of the uniform stationary distribution property:
% \footnote{We found a statement on Wikipedia (\url{https://en.wikipedia.org/wiki/Doubly_stochastic_matrix}), but we are not aware of any work that formally proves this. If such support exists, please inform us, and we will cite it accordingly.} 

\begin{proposition}
$\vpi=\frac{1}{n}\onevec$ is the stationary distribution of an ergodic $\Pmat$ if and only if $\Pmat$ is a doubly stochastic matrix (DSM).
\end{proposition}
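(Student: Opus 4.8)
The plan is to unpack the definition of a stationary distribution and that of a doubly stochastic matrix, and observe that the two conditions coincide under the uniform choice $\vpi = \frac{1}{n}\onevec$. Recall that any transition matrix $\Pmat$ is by construction row-stochastic, i.e. $\Pmat \onevec = \onevec$, so a \emph{doubly} stochastic matrix is precisely one that additionally satisfies the column-sum condition $\onevec^\top \Pmat = \onevec^\top$. Meanwhile, a stationary distribution is a probability vector $\vpi$ with $\vpi^\top \Pmat = \vpi^\top$. The whole argument hinges on the elementary equivalence between these two identities when $\vpi$ is the uniform vector.

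For the backward direction ($\Leftarrow$), I would assume $\Pmat$ is doubly stochastic. Then $\onevec^\top \Pmat = \onevec^\top$, and scaling by $\frac{1}{n}$ gives $(\frac{1}{n}\onevec)^\top \Pmat = (\frac{1}{n}\onevec)^\top$, so $\frac{1}{n}\onevec$ is a stationary distribution; it is clearly a valid probability vector, being non-negative with entries summing to one. Here the ergodicity hypothesis enters: since an ergodic chain admits a \emph{unique} stationary distribution, $\frac{1}{n}\onevec$ is necessarily \emph{the} stationary distribution, as claimed.

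For the forward direction ($\Rightarrow$), I would assume $\vpi = \frac{1}{n}\onevec$ is the stationary distribution, so that $\frac{1}{n}\onevec^\top \Pmat = \frac{1}{n}\onevec^\top$. Multiplying through by $n$ yields $\onevec^\top \Pmat = \onevec^\top$, i.e. every column of $\Pmat$ sums to one. Combined with the row-stochasticity that $\Pmat$ already enjoys as a transition matrix, this shows $\Pmat$ is doubly stochastic.

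There is no serious obstacle here: the result follows directly from the definitions via the identity $\onevec^\top \Pmat = \onevec^\top \Leftrightarrow \frac{1}{n}\onevec^\top \Pmat = \frac{1}{n}\onevec^\top$. The only points requiring care are keeping the left/right eigenvector conventions straight (the stationary distribution is a \emph{left} eigenvector, whereas row-stochasticity is a statement about the \emph{right} action on $\onevec$) and invoking ergodicity solely to upgrade ``a stationary distribution'' to ``the stationary distribution'' in the backward direction.
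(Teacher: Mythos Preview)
Your proof is correct. The forward (``only if'') direction is essentially identical to the paper's. For the backward (``if'') direction, however, you take a more direct route: you simply observe that $\onevec^\top \Pmat = \onevec^\top$ immediately gives $\frac{1}{n}\onevec^\top \Pmat = \frac{1}{n}\onevec^\top$, so the uniform vector is a stationary distribution, and ergodicity upgrades ``a'' to ``the''. The paper instead starts from an arbitrary stationary distribution $\vpi$, uses the column-sum identity $1 = \sum_i p_{ij}$ together with $\pi_j = \sum_i \pi_i p_{ij}$ to show that the vector with entries $\frac{1-\pi_i}{n-1}$ is \emph{also} stationary, and then invokes uniqueness to force $\frac{1-\pi_i}{n-1} = \pi_i$, whence $\pi_i = \frac{1}{n}$. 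Your argument is cleaner and avoids the detour; the paper's argument is a valid alternative but does more work than necessary, since one need not pass through an arbitrary $\vpi$ when the candidate $\frac{1}{n}\onevec$ can be verified directly.
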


\begin{proof}
\textbf{If}:
Note that 
\begin{align}
\pi_j=\sum_{i} \pi_i p_{ij}
\quad\text{and}\quad
1=\sum_{i} p_{ij}
\end{align}
Subtracting these two gives 
\[
1-\pi_j=\sum_{i}(1-\pi_i)p_{ij}
\quad\text{and}\quad 
\frac{1-\pi_j}{n-1}=\sum_{i}\frac{1-\pi_i}{n-1}p_{ij}.
\]
This last equation indicates that $\left(\frac{1-\pi_1}{n-1},\frac{1-\pi_2}{n-1},\cdots,\frac{1-\pi_n}{n-1}\right)^\top$ is also the stationary distribution of $\Pmat$ (note that it is non-negative and sum to one). 
Due to the uniqueness of the stationary distribution, we must have
\[
\frac{1-\pi_i}{n-1}=\pi_i
\]
and thus $\pi_i=\frac{1}{n},\forall i$.

\textbf{Only if}:
Since $\vpi=\frac{1}{n}\onevec$ is the stationary distribution of $\Pmat$, we have
\[
\frac{1}{n}\onevec^\top \Pmat = \frac{1}{n}\onevec^\top
\]
and thus $\onevec^\top \Pmat = \onevec^\top$ indicating that $\Pmat$ is column-normalized.
Since $\Pmat$ is row-normalized by definition, it is doubly stochastic.
\end{proof}

With linear function approximation,
\begin{align}
\Amat &= \Xmat^\top \Dmat (\Imat-\gamma\lambda \Pmat)^{-1}
(\Imat-\gamma \Pmat) \Xmat
\\
\vb &= \Xmat^\top \Dmat (\Imat-\gamma\lambda \Pmat)^{-1}
(\Imat-\gamma \Pmat) \vy
\end{align}
where $\Xmat\in\RR^{n\times d}$ is the feature matrix, and $\Dmat$ is uniform when $\Pmat$ is a DSM. 
As a result, we can apply a DSM~\citep{wang2010learning} projection method to our similarity matrix.

\end{document}